\pgfplotsset{compat=1.16}
\newtheorem{de}{Definition}[section]
\newtheorem{theo}{Theorem}[section]
\newtheorem{lem}[theo]{Lemma}
\newenvironment{remark}[1][Remark]{\begin{trivlist}
\item[\hskip \labelsep {\bfseries #1}]}{\end{trivlist}}
\newcommand{\argmax}{\operatornamewithlimits{argmax~}}
\newcommand{\argmin}{\operatornamewithlimits{argmin~}}
\newcommand{\mb}[1]{{\boldsymbol{#1}}}
\newcommand{\vu}{\mb{u}}
\newcommand{\balpha}{\bm{\alpha}}
\newcommand{\bbeta}{\bm{\beta}}
\newcommand{\btau}{\bm{\tau}}
\newcommand{\brho}{\bm{\rho}}
\newcommand{\bDelta}{\bm{\Delta}}
\newcommand{\bs}{\bm{s}}
\definecolor{codegreen}{rgb}{0,0.6,0}
\definecolor{codegray}{rgb}{0.5,0.5,0.5}
\definecolor{codepurple}{rgb}{0.58,0,0.82}
\definecolor{backcolour}{rgb}{0.95,0.95,0.92}
\lstdefinestyle{mystyle}{
    backgroundcolor=\color{backcolour},   
    commentstyle=\color{codegreen},
    keywordstyle=\color{magenta},
    numberstyle=\tiny\color{codegray},
    stringstyle=\color{codepurple},
    basicstyle=\ttfamily\footnotesize,
    breakatwhitespace=false,         
    breaklines=true,                 
    captionpos=b,                    
    keepspaces=true,                 
    numbers=left,                    
    numbersep=5pt,                  
    showspaces=false,                
    showstringspaces=false,
    showtabs=false,                  
    tabsize=2
}
\definecolor{darkbluetemp}{rgb}{0,0.08,0.45}
\icmltitlerunning{HardCoRe-NAS}
\begin{document}
\twocolumn[
\icmltitle{HardCoRe-NAS: Hard Constrained diffeRentiable\\Neural Architecture Search}
\icmlsetsymbol{equal}{*}
\begin{icmlauthorlist}
\icmlauthor{Niv Nayman}{equal,to}
\icmlauthor{Yonathan Aflalo}{equal,to}
\icmlauthor{Asaf Noy}{to}
\icmlauthor{Lihi Zelnik-Manor}{to}
\end{icmlauthorlist}
\icmlaffiliation{to}{Alibaba Group, Tel Aviv, Israel}
\icmlcorrespondingauthor{Niv Nayman}{niv.nayman@alibaba-inc.com}
\icmlcorrespondingauthor{Yonathan Aflalo}{johnaflalo@gmail.com}
\icmlkeywords{Machine Learning, ICML}
\vskip 0.16in
]
\printAffiliationsAndNotice{\icmlEqualContribution} 
\begin{abstract}
Realistic use of neural networks often requires adhering to multiple constraints on latency, energy and memory among others.
A popular approach to find fitting networks is through constrained Neural Architecture Search (NAS), however, previous methods enforce the constraint only softly. 
Therefore,  the resulting networks do not exactly adhere to the resource constraint and their accuracy is harmed.
In this work we resolve this by introducing \textit{Hard Constrained diffeRentiable NAS (HardCoRe-NAS)}, that is based on an accurate formulation of the expected resource requirement and a scalable search method that satisfies the hard constraint throughout the search.
Our experiments show that HardCoRe-NAS generates state-of-the-art architectures, surpassing other NAS methods, while strictly satisfying the hard resource constraints without any tuning required.
\end{abstract}

\begin{figure}[t]
\begin{adjustbox}{width=0.5\textwidth}
\begin{tikzpicture}
\begin{axis}[
            axis x line=left,
            axis y line=left,
            xmajorgrids=true,
            ymajorgrids=true,
            grid=both,
            xlabel style={below=1ex},
            enlarge x limits,
            ymin = 74.0,
            ymax = 78.1,
            xmin = 39.0,
            xmax = 61.0,
            xtick = {37,39,...,61},
            ytick = {74.0,74.5,...,78.1},
            ylabel = Top-1 Accuracy (\%),
            xlabel = Latency  (milliseconds),
            legend pos=north west,
            legend style={nodes={scale=0.8, transform shape}}
    ]


\addplot[color=red, mark=triangle*,very thick]coordinates {(37, 75.3)(40,75.8)(44, 76.4)(50, 77.1)(55, 77.6)(61,78.0)};\addlegendentry{HardCoRe-NAS}




\addplot[only marks,color=blue,mark=*,very thick]coordinates {(45, 75.2)} node (MobileNet_V3){};
\node[color=blue, above right = -0.1cm and -0.3cm of MobileNet_V3, font=\tiny] {MobileNet V3};

\addplot[only marks,color=blue,mark=*,very thick]coordinates {(40, 74.4)} node (tfnasa){};
\node[color=blue, right of=tfnasa, font=\tiny, node distance = 1.2cm] {TF-NAS-CPU A};

\addplot[only marks,color=blue,mark=*,very thick]coordinates {(60, 75.8)} node (tfnasb){};
\node[color=blue, left of=tfnasb, font=\tiny, node distance = 1.1cm] {TF-NAS-CPU B};


\addplot[only marks,color=blue,mark=*,very thick]coordinates {(55, 75.2)} node (mnasa1){};
\node[color=blue, above right = 0.0cm and -0.5cm of mnasa1, font=\tiny, node distance = 0.01cm] {MNAS-NET-A1};

\addplot[only marks,color=blue,mark=*,very thick]coordinates {(39, 74.5)} node (mnasb1){};
\node[color=blue, above right = 0.0cm and -0.5cm of mnasb1, font=\tiny, node distance = 0.01cm] {MNAS-NET-B1};

\addplot[only marks,color=blue,mark=*,very thick]coordinates {(41, 74.1)} node (spnas){};
\node[color=blue, right of=spnas, font=\tiny, node distance = 0.9cm] {SPNAS-NET};


\addplot[only marks,color=blue,mark=*,very thick]coordinates {(47, 75.1)} node (fbnet){};
\node[color=blue, right of=fbnet, font=\tiny, node distance = 0.7cm] {FB-NET};

\addplot[only marks,color=blue,mark=*,very thick]coordinates {(61, 76.7)} node (FairNASC){};
\node[color=blue, left of=FairNASC, font=\tiny, node distance = 0.8cm] {FairNAS-C};


\end{axis}
\end{tikzpicture}
\end{adjustbox}
\vspace*{-7mm}
\caption{Top-1 accuracy on Imagenet vs Latency measured on Intel Xeon CPU for a batch size of 1. HardCoreNAS can generate a network for any given latency, with accuracy according to the red curve, which is higher than all previous methods.}
\label{fig:acc_nas}
\end{figure}
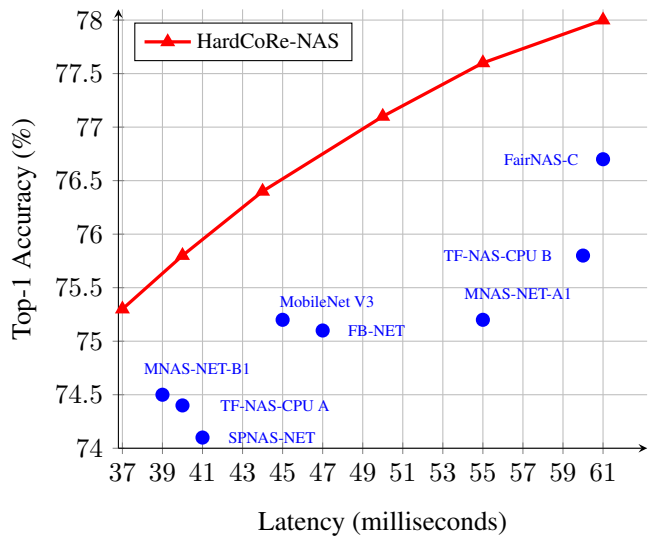
\section{Introduction}
With the rise in popularity of Convolutional Neural Networks (CNN), the need for neural networks with fast inference speed and high accuracy, has been growing continuously. At first, manually designed architectures, such as VGG~\cite{VGG} or ResNet~\cite{ResNet}, targeted powerful GPUs as those were the common computing platform for deep CNNs. Many variants of those architectures were the golden standard until the need for deployment on edge devices and standard CPUs emerged. These are more limited computing platforms, requiring lighter architectures that for practical scenarios have to comply with hard constraints on the real time latency or power consumption. This has spawned a line of research aimed at finding architectures with both high performance and bounded resource demand. 

The main approaches to solve this evolved from Neural Architecture Search (NAS)~\cite{zoph2016neural, liu2018darts, cai2018proxylessnas}, while adding a constraint on the target latency over various platforms, e.g., TPU, CPU, Edge-GPU, FPGA, etc. 
%
%
The constrained-NAS methods can be grouped into two categories: (i) Reward based methods such as Reinforcement-Learning (RL) or Evolutionary Algorithm (EA) \cite{OFA,tan2019mnasnet, effnet, mobilenetv3}, where the search is performed by sampling networks and predicting their final accuracy and resource demands by evaluation over some validation set. The predictors are expensive to acquire and oftentimes inaccurate. (ii) Resource-aware gradient based methods \cite{TF-NAS,fbnet} formulate a differentiable loss function consisting of a trade-off between an accuracy term and a proxy soft penalty term. Therefore, the architecture can be directly optimized using stochastic gradient descent (SGD)~\cite{SGD}, however, it is hard to tune the trade-off between accuracy and resources, which deteriorates the network accuracy and fails to fully meet the resource requirements. 
The hard constraints over the resources are further violated due to a final discretization step projecting the architecture over the differentiable search space into the discrete space of architectures.
 
In this paper, we propose a search algorithm that produces architectures with high accuracy (Figure~\ref{fig:acc_nas}) that strictly satisfy any given hard latency constraint (Figure~\ref{fig:latency}). The search algorithm is fast and scalable to a large number of platforms. The proposed algorithm is based on several key ideas, starting from formulating the NAS problem more accurately, accounting for hard constraints over resources, and solving every aspect of it rigorously. For clarity we focus in this paper on latency constraints, however, our approach can be generalized to other types of resources.

 At the heart of our approach lies a suggested differentiable search space that induces a one-shot model~\cite{bender2018understanding, fairnas, SPOS, OFA} that is easy to train via a simple, yet effective, technique for sampling multiple sub-networks from the one-shot model, such that each one is properly pretrained. 
 We further suggest an accurate formula for the expected latency of every architecture residing in that space. Then, we search the space for sub-networks by solving a hard constrained optimization problem while keeping the one-shot model pretrained weights frozen. We show that the constrained optimization can be solved via the block coordinate stochastic Frank-Wolfe (BC-SFW) algorithm~\cite{SFW,BCFW}. Our algorithm converges faster than SGD, while tightly satisfying the hard latency constraint continuously throughout the search, including during the final discretization step.
 
 The approach we propose has several advantages. First, the outcome networks provide high accuracy and closely comply to the latency constraint. 
 In addition, our solution is scalable to multiple target devices and latency demands. This scalability is due to the efficient pretraining of the one-shot model as well as the fast search method that involves a relatively small number of parameters, governing only the structure of the architecture. 
 We hope that our formulation of NAS as a constrained optimization problem, equipped with an efficient algorithm that solves it, could give rise to followup work incorporating a variety of resource and structural constraints over the search space. 
\section{Related Work}
\textbf{Efficient Neural Networks} are designed to meet the rising demand of deep learning models for numerous tasks per hardware constraints. Manually-crafted architectures such as MobileNets~\cite{howard2017mobilenets,sandler2018mobilenetv2} and ShuffleNet~\cite{zhang2018shufflenet} were designed for mobile devices, while TResNet~\cite{ridnik2020tresnet} and ResNesT~\cite{zhang2020resnest} are tailor-made for GPUs. Techniques for improving efficiency include pruning of redundant channels~\cite{dong2019network,aflalo2020knapsack} and layers~\cite{han2015learning}, model compression~\cite{han2015deep, he2018amc} and weight quantization methods~\cite{hubara2016binarized,umuroglu2017finn}. Dynamic neural networks adjust models based on their inputs to accelerate the inference, via gating modules~\cite{wang2018skipnet}, graph branching~\cite{huang2017multi} or dynamic channel selection~\cite{lin2017runtime}. These techniques are applied on  predefined architectures, hence cannot utilize or satisfy specific hardware constraints. 

\textbf{Neural Architecture Search} methods automate models' design per provided constraints. Early methods like NASNet~\cite{zoph2016neural} and AmoebaNet~\cite{real2019regularized} focused solely on accuracy, producing SotA classification models~\cite{huang2019gpipe} at the cost of GPU-years per search, with relatively large inference times. DARTS~\cite{liu2018darts} introduced a differential space for efficient search and reduced the training duration to days, followed by XNAS~\cite{nayman2019xnas} and ASAP~\cite{noy2020asap} that applied pruning-during-search techniques to further reduce it to hours. 
Hardware-aware methods such as ProxylessNAS~\cite{cai2018proxylessnas}, Mnasnet~\cite{tan2019mnasnet}, FBNet~\cite{fbnet} and TFNAS~\cite{TF-NAS} 
produce architectures that satisfy the required constraints by applying simple heuristics such as soft penalties on the loss function.
OFA~\cite{OFA} proposed a scalable approach across multiple devices by training an one-shot model~\cite{brock2017smash,bender2018understanding} for 1200 GPU hours. This provides a strong pretrained super-network being highly predictive for the accuracy of extracted sub-networks~\cite{SPOS,fairnas,yu2020train}. This work relies on such one-shot model acquired within only 400 GPU hours in a much simpler manner and satisfies hard constraints tightly with less heuristics.

%

\textbf{Frank-Wolfe (FW) algorithm}~\cite{frank_wolfe} 
is commonly used by machine learning applications~\cite{sun2019survey} thanks to its projection-free property~\cite{combettes2020projection,hazan2020faster} and ability to handle structured constraints~\cite{jaggi2013revisiting}. Modern adaptations aimed at deep neural networks (DNNs) optimization include more efficient variants~\cite{zhang2020quantized,combettes2020projection}, task-specific variants~\cite{chen2020frank,tsiligkaridis2020frank}, as well as improved convergence guarantees~\cite{lacoste2015global,d2020global}. Two prominent variants are the stochastic FW~\cite{hazan2016variance} and Block-coordinate FW~\cite{lacoste2013block}.
While FW excels as an optimizer for DNNs~\cite{berrada2018deep,pokutta2020deep}, this work is the first to utilize it for NAS.

%
%
%
\vspace{-1.5mm}
\section{Method} \label{sec:method}
In the scope of this paper, we focus on latency-constrained NAS, searching for an architecture with the highest validation accuracy under a predefined latency constraint, denoted by $T$. Our architecture search space $\mathcal{S}$ is parametrized by $\zeta\in\mathcal{S}$, governing the architecture structure in a fully differentiable manner, and $w$, the convolution weights.
A latency-constrained NAS can be expressed as the following constrained bilevel optimization problem:
\begin{align}
&\min_{\zeta \in \mathcal{S}}  \mathbb{E}_{
\begin{array}{c}
 x, y \sim\mathcal{D}_{val} \\
\hat{\zeta} \sim \mathcal{P}_\zeta(\mathcal{S})
\end{array}
}[\mathcal{L}_{CE}(x, y \mid w^*, \hat{\zeta})] \notag\\
\label{eqn:NAS_bilevel}
&\text{s.t. } 
\text{LAT}(\zeta)\leq T \\
&w^*=\argmin_{w} \mathbb{E}_{
\begin{array}{c}
 x, y \sim\mathcal{D}_{train} \\
\hat{\zeta} \sim \mathcal{P}_\zeta(\mathcal{S})
\end{array}
}[\mathcal{L}_{CE}(x, y \mid w, \hat{\zeta}) ]\notag
\end{align}
where $\mathcal{D}_{train}$ and $\mathcal{D}_{val}$ are the train and validation sets' distributions respectively, $\mathcal{P}_\zeta(\mathcal{S})$ is some probability measure over the search space parameterized by $\zeta$, $\mathcal{L}_{CE}$ is the cross entropy loss as a differentiable proxy for the negative accuracy and $LAT(\zeta)$ is the estimated latency of the model.

To solve problem~\eqref{eqn:NAS_bilevel}, we construct a fully differentiable search space parameterized by $\zeta=(\balpha, \bbeta)$ (Section~\ref{sec:search_space}), that enables the formulation of a differentiable closed form formula expressing the estimated latency $LAT(\balpha, \bbeta)$ (Section~\ref{sec:latency_formula}) and efficient acquisition of $w^*$ (Section~\ref{sec:inner_problem}). Finally, we introduce rigorous constrained optimization techniques for solving the outer level problem (Section~\ref{sec:outer_problem}).

\subsection{Search Space}
\label{sec:search_space}
Aiming at the most accurate latency, a flexible search space is composed of a micro search space that controls the internal structures of each block $b\in\{1,..,d=4\}$, together with a macro search space that specifies the way those are connected to one another in every stage $s\in\{1,..,S=5\}$.

\subsubsection{Micro-Search}\label{sec:micro_search}
Every block is an \textit{elastic} version of the MBInvRes block, introduced in~\cite{mobilenetv2}, with expansion ratio $er\in\mathcal{A}_{er}=\{3,4,6\}$ of the point-wise convolution, kernel size $k\in\mathcal{A}_{k}=\{3\times3, 5\times5\}$ of the depth-wise separable convolution (DWS), and Squeeze-and-Excitation (SE) layer~\cite{SE} $se\in\mathcal{A}_{se}=\{\text{on}, \text{off}\}$.
The blocks are configurable, as illustrated at the bottom of Figure~\ref{fig:super_net}, using a parametrization $(\alpha^s_{b,er}, \alpha^s_{b,k}, \alpha^s_{b,se})$, defined for every block $b$ of stage $s$:
\begin{align*}
    \alpha^s_{b,er\setminus k \setminus se}\in \{0, 1\}  \quad \forall er\in\mathcal{A}_{er},\, k \in \mathcal{A}_{k},\, se\in \mathcal{A}_{se}
    \\
    \Sigma_{er \in \mathcal{A}_{er}} \alpha^s_{b,er} = 1
    \, ; \,
    \Sigma_{k \in \mathcal{A}_{k}} \alpha^s_{b,k} = 1
    \,; \,
    \Sigma_{se \in \mathcal{A}_{se}} \alpha^s_{b,k} = 1
\end{align*}
\begin{figure}[t]
  \centering
  \includegraphics[width=0.45\textwidth]{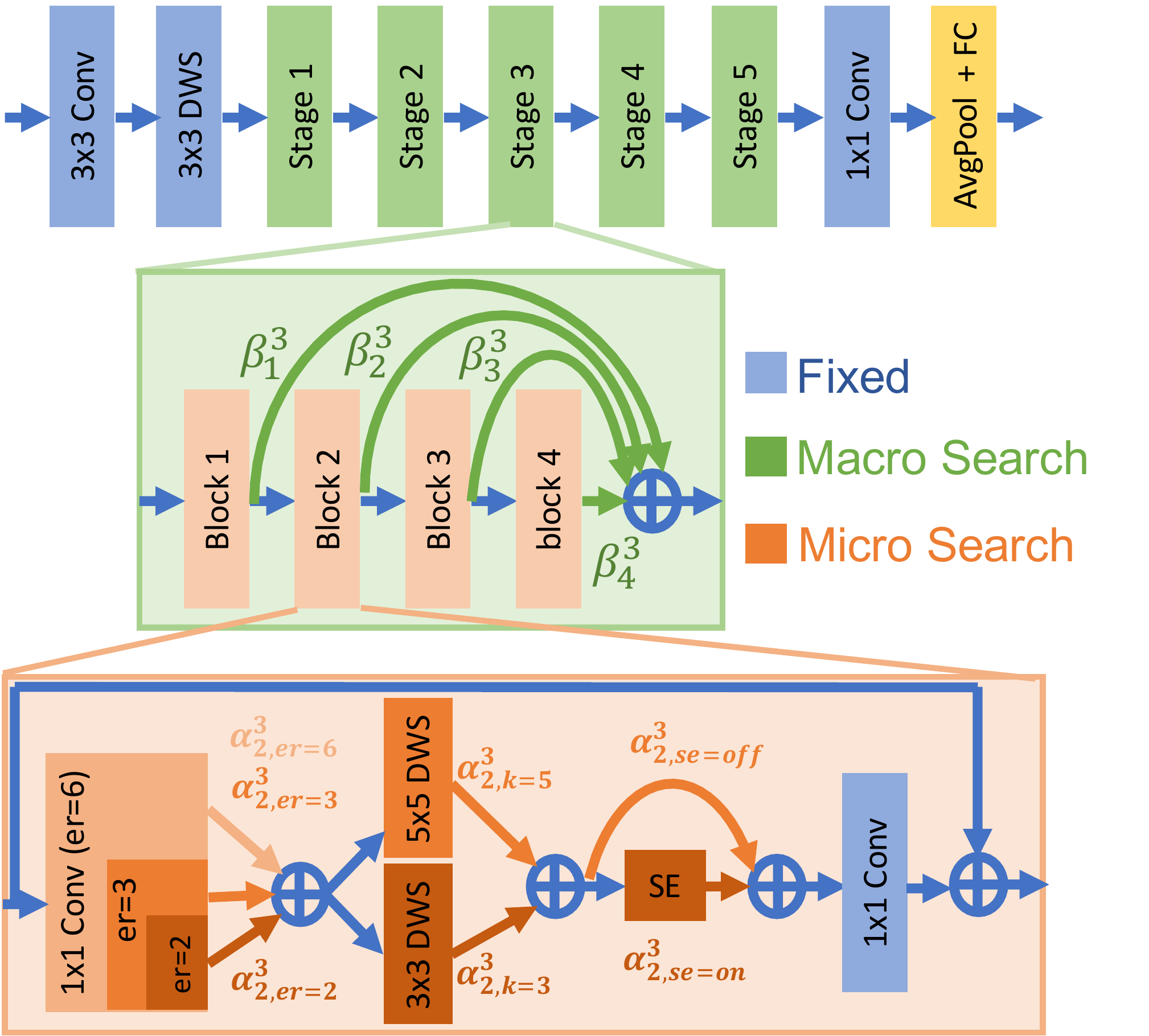}
\caption{A search space view via the one-shot model}
\label{fig:super_net}
\end{figure}
Each triplet $(er, k, se)$ induces a block configuration $c$ that resides within a micro-search space $\mathcal{C}=\mathcal{A}_{er}\otimes\mathcal{A}_{k}\otimes\mathcal{A}_{se}$, parameterized by  $\balpha\!\in\!\mathcal{A}=\bigotimes_{s=1}^S\bigotimes_{b=1}^d\bigotimes_{c\in\mathcal{C}}\alpha^s_{b,c}$, where $\otimes$ denotes the Cartesian product. Hence, for each block $b$ of stage $s$ we have:
\begin{align*}
    \alpha^s_{b,c}\in \{0, 1\}^{\left|\mathcal{C}\right|} \quad &; \quad 
    \Sigma_{c\in\mathcal{C}} \alpha^s_{b,c} = 1
\end{align*}
An input feature map $x^s_b$ to block $b$ of stage $s$ is processed as follows:
\begin{align*}
    x^s_{b+1}=\sum_{c\in\mathcal{C}}\alpha^s_{b,c}\cdot O^s_{b,c}(x^s_b) 
\end{align*}
where $O^s_{b,c}(\cdot)$ is the operation performed by the elastic MBInvRes block configured according to $c=(er,k,se)$.

Having several configurations $O^s_{b,c}(\cdot)$ share the same values of $\alpha^s_{b,er}$ or $\alpha^s_{b,k}$ or $\alpha^s_{b,se}$ induces weight sharing between the common operations of the associated architectures. This weight sharing is beneficial for solving the inner problem~\eqref{eqn:NAS_bilevel} effectively, as will be discussed in Section~\ref{sec:inner_problem}.

\subsubsection{Macro-Search}\label{sec:macro_search}
Inspired by~\cite{TF-NAS, OFA}, the output of each block of every stage is also directed to the end of the stage as illustrated in the middle of Figure~\ref{fig:super_net}. Thus, the depth of each stage $s$ is controlled by the parameters $\bbeta\in\mathcal{B}=\bigotimes_{s=1}^S\bigotimes_{b=1}^d \beta^s_b$, such that:
\begin{align*}
    \beta^s_b\in \{0, 1\}^d \quad &; \quad 
    \Sigma_{b=1}^d \beta^s_b = 1
\end{align*}
The depth is $b^s\!\in\!\left\{b\mid\beta^s_b=1, b\in\{1,..,d\}\right\}$, since:
\begin{align*}
    x^{s+1}_1=\Sigma_{b=1}^{d}\beta^s_b\cdot x^s_{b+1}
\end{align*}

\subsubsection{The Composed Search Space}\label{sec:composed_search_space}
The overall search space is composed of both the micro and macro search spaces parameterized by $\balpha\in\mathcal{A}$ and $\bbeta\in\mathcal{B}$, respectively, such that:
\begin{align}
   \mathcal{S} = \left\{(\balpha, \bbeta)\left\vert
   \begin{array}{c}
    \balpha\in\mathcal{A}, \bbeta\in\mathcal{B}
    \\
     \alpha^s_{b,c}\in \{0, 1\}^{\left|\mathcal{C}\right|} \,; \, 
    \Sigma_{c\in\mathcal{C}} \alpha^s_{b,c} = 1  \\
   \beta^s_b\in \{0, 1\}^d \,; \,  
    \Sigma_{b=1}^d \beta^s_b = 1 
    \\
    \forall s\in\{1,..,S\}, b\in\{1,..,d\},c\in\mathcal{C}
   \end{array}\right.
   \right\}
   \notag
\end{align}

A continuous probability distribution is induced over the space, by relaxing $\alpha^s_{b,c}\in \{0, 1\}^{\left|\mathcal{C}\right|} \rightarrow\alpha^s_{b,c}\in \mathbb{R}_+^{\left|\mathcal{C}\right|}$ and $ \beta^s_b\in \{0, 1\}^d \rightarrow\beta^s_b\in \mathbb{R}_+^d$ to be continuous rather than discrete.
A sample sub-network is drawn $\hat{\zeta}=(\hat{\balpha}, \hat{\bbeta})\sim \mathcal{P}_\zeta(\mathcal{S})$ using the Gumbel-Softmax Trick~\cite{GumbelSM} such that $\hat{\zeta}\in\mathcal{S}$, as specified in~\eqref{eqn:gumbel_alpha} and~\eqref{eqn:gumbel_beta}.
In summary, one can view the parametrization $(\balpha, \bbeta)$ as a composition of probabilities in $\mathcal{P}_{\zeta}(\mathcal{S})$ or as degenerated one-hot vectors in $\mathcal{S}$.

Effectively we include at least a couple of blocks in each stage by setting $\beta^s_1\equiv 0$, hence, the overall size of the search space is:
\begin{align*}
&|\mathcal{S}|=(\sum_{b=2}^d \left|\mathcal{C}\right|^b)^S =
(\sum_{b=2}^d \left|\mathcal{A}_{er}\right|^b\cdot\left|\mathcal{A}_{k}\right|^b\cdot\left|\mathcal{A}_{se}\right|^b)^S \\
&=((3\times 2\times 2)^2+(3\times 2 \times 2)^3+(3\times 2\times 2)^4)^5\approx 10^{27}
\end{align*}

\subsection{Formulating the Latency Constraint}
\label{sec:latency_formula}
Aiming at tightly satisfying latency constraints, we propose an accurate formula for the expected latency of a sub-network. The expected latency of a block $b$ can be computed by summing over the latency $t^s_{b,c}$ of every possible configuration $c\in\mathcal{A}$:
$$\bar{\ell}^s_{b}=\Sigma_{c\in\mathcal{C}} \alpha^s_{b,c} \cdot t^s_{b,c}$$
Thus the expected latency of a stage of depth $b'$ is 
\begin{align}\label{eqn:latency_correction}
\ell^s_{b'}=\sum_{b=1}^{b'}\bar{\ell}^s_{b}
\end{align}
Taking the expectation over all possible depths yields  
$$\ell^s= \sum_{b'=1}^d \beta^s_{b'}\cdot \ell^s_{b'}$$
and summing over all the stages results in the following formula for the overall latency:
\begin{align}\label{eqn:latency_formula}
   LAT(\balpha, \bbeta) = 
   \sum_{s=1}^S \sum_{b'=1}^d \sum_{b=1}^{b'}\sum_{c\in\mathcal{C}} \alpha^s_{b,c} \cdot t^s_{b,c} \cdot \beta^s_{b'}
\end{align}
The the summation originated in \eqref{eqn:latency_correction} differentiates our latency formulation~\eqref{eqn:latency_formula} from that of~\cite{TF-NAS}. 
%

Figure~\ref{fig:latency} provides empirical validation of~\eqref{eqn:latency_formula}, showing that in practice the actual and estimated latency are very close on both GPU and CPU. More details on the experiments are provided in Section~\ref{sec:validate_latency}.

\begin{figure}[t]
\centering
\begin{adjustbox}{width=0.47\textwidth}
\begin{tikzpicture}
\begin{axis}[
            axis x line=left,
            axis y line=left,
            xmajorgrids=true,
            ymajorgrids=true,
            grid=both,
            xlabel style={below=1ex},
            enlarge x limits,
            ymin = 24.0,
            ymax = 66,
            xmin = 24.0,
            xmax = 66,
            xtick = {25,30,...,65},
            ytick = {25,30,...,65},
            ylabel = Measured Latency (milliseconds),
            xlabel = Latency Formula (milliseconds),
            legend pos=north west,
            legend style={nodes={scale=0.8, transform shape}}
    ]

\addplot[only marks, color=red, mark=triangle*,very thick]coordinates {(25, 27)(30, 32)(35, 36)(40, 42)}	
;\addlegendentry{GPU}

\addplot[only marks, color=blue, mark=triangle*,very thick]coordinates {
(35, 35.36)
(40, 42.63)
(45, 47.45)
(50, 54.51)
(55, 56.98)
(60, 61.93)
(65, 64.03)
};\addlegendentry{CPU}

\addplot[style=dashed, color=ForestGreen, very thick]coordinates {(24,24)(66, 66)};
\addplot[color=ForestGreen,very thick]coordinates {(55,35)} node (kendal_tau_ours_100){
$\begin{array}{ll}
     \mathbf{y}&\mathbf{=} \mathbf{1.003\cdot x} \\
     \mathbf{R^2}&\mathbf{=}\mathbf{0.99} 
\end{array}$
};
\end{axis}
\end{tikzpicture}
\end{adjustbox}
\vspace*{-3mm}
\caption{Experimental results showing that the latency calculated with formula~\eqref{eqn:latency_formula} is very close to the latency measured in practice, on both CPU and GPU.}
\label{fig:latency}
\end{figure}

\textbf{Remark:} By replacing the latency measurements $t^s_{b,c}$ with other resources, e.g., memory, FLOPS, MACS, etc., one can use formula~\eqref{eqn:latency_formula} to add multiple hard constraints to the outer problem of~\eqref{eqn:NAS_bilevel}.

\subsection{Solution to the Inner Problem $w^*$}
\label{sec:inner_problem}
Previous work proposed approximated solutions to the following unconstrained problem:
\begin{align*}
&\min_{\zeta \in \mathcal{S}} \mathbb{E}\left[\mathcal{L}_{CE}\left(x, y \mid w^*_\zeta, \zeta\right)\right] \notag\\
&\text{s.t. }
w^*_\zeta=\argmin_{w} \mathbb{E}_{\zeta\sim\mathcal{P}_\zeta(\mathcal{S})}\left[\mathcal{L}_{CE}\left(x, y \mid w, \zeta\right)\right] \notag
\end{align*}
typically by alternating or simultaneous updates of $w$ and $\zeta$ \cite{liu2018darts, SNAS, cai2018proxylessnas, fbnet, TF-NAS}.
This approach has several limitations.
First, obtaining a designated $w^*_\zeta$ with respect to every update of $\zeta$ involves a heavy training of a neural network until convergence. Instead a very rough approximation is obtained by just a few update steps for $w$.
In turn, this approximation creates a bias towards strengthening networks with few parameters since those learn faster, hence, get sampled even more frequently, further increasing the chance to learn in a positive feedback manner. Eventually, often overly simple architectures are generated, e.g., consisting of many skip-connections~\cite{P-DARTS, DARTS+}. Several remedies have been proposed, e.g., temperature annealing, adding uniform sampling, modified gradients and updating only $w$ for a while before the joint optimization begins~\cite{noy2020asap, fbnet, TF-NAS, nayman2019xnas}. While those mitigate the bias problem, they do not solve it.

We avoid such approximation whatsoever. Instead we obtain $w^*$ of the inner problem of~\eqref{eqn:NAS_bilevel} only once, with respect to a uniformly distributed architecture, sampling $\bar{\zeta}$ from $\mathcal{P}_{\bar{\zeta}}(\mathcal{S})=U(\mathcal{S})$.

This is done by sampling multiple distinctive paths (sub-networks of the one-shot model) for every image in the batch in an efficient way (just a few lines of code provided in the supplementary materials), using the Gumbel-Softmax Trick. 
For every feature map $x$ that goes through block $b$ of stage $s$, distinctive uniform random variables $\mathcal{U}^s_{b,c,x}, \mathcal{U}^s_{b,x}\!\sim\!U(0,1)$ are sampled, governing the path undertaken by this feature map:
\begin{align}
    \label{eqn:gumbel_alpha}
    &\hat{\alpha}^s_{b,c,x} = \frac{e^{\log(\alpha^s_{b,c}) -\log(\log(\mathcal{U}^s_{b,c,x}))}}{\sum_{c\in \mathcal{C}} e^{\log(\alpha^s_{b,c}) -\log(\log(\mathcal{U}^s_{b,c,x}))}}
    \\
    \label{eqn:gumbel_beta}
    &\hat{\beta}^s_{b,x} = \frac{e^{\log(\beta^s_b) -\log(\log(\mathcal{U}^s_{b,x}))}}{\sum_{c\in \mathcal{C}} e^{\log(\alpha^s_{b,c}) -\log(\log(\mathcal{U}^s_{b,x}))}}
\end{align}


Based on the observation that the accuracy of a sub-network with $w^*$ should be predictive for its accuracy when optimized as a stand-alone model from scratch, we aim at an accurate prediction.
Our simple approach implies that, with high probability, the number of paths sampled at each update step is as high as the number of images in the batch. This is two orders of magnitude larger than previous methods that sample a single path per update step \cite{SPOS, OFA}, while avoiding the need to keep track of all the sampled paths \cite{fairnas}. Using multiple paths reduces the variance of the gradients with respect to the paths sampled by an order of magnitude\footnote{A typical batch consists of hundreds of i.i.d paths, thus a variance reduction of the square root of that is in place.}. Furthermore, leveraging the weight sharing implied by the structure of the elastic MBInvRes block (Section~\ref{sec:micro_search}), the number of gradients observed by each operation is increased by a factor of at least $\frac{|\mathcal{C}|}{\max(|\mathcal{A}_{er}|,|\mathcal{A}_{k}|,|\mathcal{A}_{se}|)}=\frac{3\times2\times2}{\max(3,2,2)}\approx 4$. This further reduces the variance by half. 

\begin{figure}[t]
\begin{adjustbox}{width=0.47\textwidth}
\begin{tikzpicture}
\begin{axis}[
            axis x line=left,
            axis y line=left,
            xmajorgrids=true,
            ymajorgrids=true,
            grid=both,
            xlabel style={below=1ex},
            enlarge x limits,
            ymin = 68.0,
            ymax = 77.7,
            xmin = 70.0,
            xmax = 77.7,
            xtick = {68,67,...,78},
            ytick = {68,67,...,78},
            ylabel = One-shot Accuracy (\%),
            xlabel = Stand-alone Accuracy (\%),
            legend pos=north west,
            legend style={nodes={scale=0.8, transform shape}}
    ]

\addplot[only marks, color=ForestGreen, mark=triangle*,very thick]coordinates {(70.95,69.68)(71.47, 70.5)(72, 71.8)(72.83, 72.60)(73.88, 73.34)(74.29, 73.89)
(73.24,73.25)(73.61,73.69)(74.32, 74.83)(74.82, 75.28)(75.80, 75.82)(76.12,76.03)
(74.62, 74.2)(74.87,74.7 )(75.62,75.66)(75.85,75.88)(76.72,76.43)(77.07,76.57)};\addlegendentry{Multi-path} 



\addplot[only marks, color=orange, mark=triangle*,very thick]coordinates {(70.95,61.036)(71.47,65.73)(72,68.374)(72.83,71.178)(73.88,71.518)(74.29,73.03)
(73.24,71.554)(73.61,72.798)(74.32,74.094)(74.82,74.194)(75.80,74.798)(76.12,74.274)
(74.62,73.278)(74.87,73.92)(75.62,74.636)(75.85,74.074)(76.72,74.322)(77.07,72.866)};\addlegendentry{Single-path} 



\addplot[style=dashed, color=black, very thick]coordinates {(66, 66)(78, 78)};

\addplot[color=ForestGreen,very thick]coordinates {(71,74)} node (kendal_tau_ours_100){\makecell{$\mathbf{\btau=0.95}$\\ $\mathbf{\brho=0.99}$}};



\addplot[color=orange,very thick]coordinates {(76, 70)} node (kendal_tau_other3){\makecell{$\mathbf{\btau=0.70}$\\ $\mathbf{\brho=0.82}$}};


\end{axis}
\end{tikzpicture}
\end{adjustbox}
\vspace*{-3mm}
\caption{Top-1 accuracy on Imagenet of networks trained from scratch v.s. corresponding sub-networks extracted from our one-shot model. $\tau$ and $\rho$ represent the Kendall-Tau and Spearman correlation coefficients, respectively.}
\label{fig:acc_kendal_tau}
\end{figure}

Figure~\ref{fig:acc_kendal_tau} shows that we obtain a one-shot model $w^*$ with high correlation between the ranking of sub-networks directly extracted from it and the corresponding stand-alone counterpart trained from scratch. See more details in Section~\ref{sec:evaluate_inner}. This implies that $w^*$ captures well the quality of sub-structures in the search space.
\subsection{Solving the Outer Problem}
\label{sec:outer_problem}
Having defined the formula for the latency LAT$(\zeta)$ and obtained a solution for $w^*$, we can now continue to solve the outer problem~\eqref{eqn:NAS_bilevel}. 

\subsubsection{Searching Under Latency Constraints}
Most differentiable resource-aware NAS methods account for the resources through shaping the loss function with soft penalties~\cite{fbnet, TF-NAS}. This approach solely does not meet the constraints tightly.
Experiments illustrating this are described in Section~\ref{subsec:toy_example}.

Our approach directly solves the constrained outer problem~\eqref{eqn:NAS_bilevel}, hence, it enables the strict satisfaction of resource constraints by further restricting the search space, i.e.,
$\mathcal{S}_{LAT}=\{\zeta \mid \zeta\in \mathcal{P}_\zeta(\mathcal{S}), \text{LAT}(\zeta)\leq T\}$.

As commonly done for gradient based approaches, e.g.,~\cite{liu2018darts}, we relax the discrete search space $\mathcal{S}$ to be continuous by searching for $\zeta\in \mathcal{S}_{LAT}$.
As long as $\mathcal{S}_{LAT}$ is convex, it could be leveraged for applying the stochastic Frank-Wolfe (SFW) algorithm~\cite{SFW} to directly solve the constrained outer problem:
\begin{align}\label{eqn:constrained_outer_problem}
&\min_{\zeta \in \mathcal{S}_{LAT}} \mathbb{E}_{x, y \sim\mathcal{D}_{val}}\left[\mathcal{L}_{CE}\left(x, y \mid w^*, \zeta\right)\right] 
\end{align}
following the update step:
\begin{align}\label{eqn:sfw_step}
\zeta_{t+1} &= (1-\gamma_t)\cdot\zeta_t + \gamma_t\cdot\xi_t\\
\xi_t&=\argmin_{\zeta \in \mathcal{S}_{LAT}}\zeta^T\cdot \nabla_{\zeta_t}\mathcal{L}_{CE}\left(x_t, y_t \mid w^*, \zeta_t\right)
\label{eqn:lp}
\end{align}
where $(x_t, y_t)$ and $\gamma_t$ are the sampled data and the learning rate at step $t$, respectively.
For $\mathcal{S}_{LAT}$ of linear constraints, the linear program~\eqref{eqn:lp} can be solved efficiently, using the Simplex algorithm~\cite{simplex}.

A convex $\mathcal{S}_{LAT}$ together with $\gamma_t\in[0,1]$ satisfy $\zeta_t\in\mathcal{S}_{LAT}$ anytime, as long as $\zeta_0\in\mathcal{S}_{LAT}$.
 We provide a method for satisfying the latter in the supplementary materials.

The benefits of such optimization are demonstrated in Figure~\ref{fig:fw_vs_gd} through a toy problem, described in Section~\ref{subsec:toy_example}. While GD is sensitive to the trade-off involved with a soft penalty, FW converges faster to the optimum with zero penalty. 

\begin{figure}[htb]
    \centering
    \includegraphics[width=0.47\textwidth]{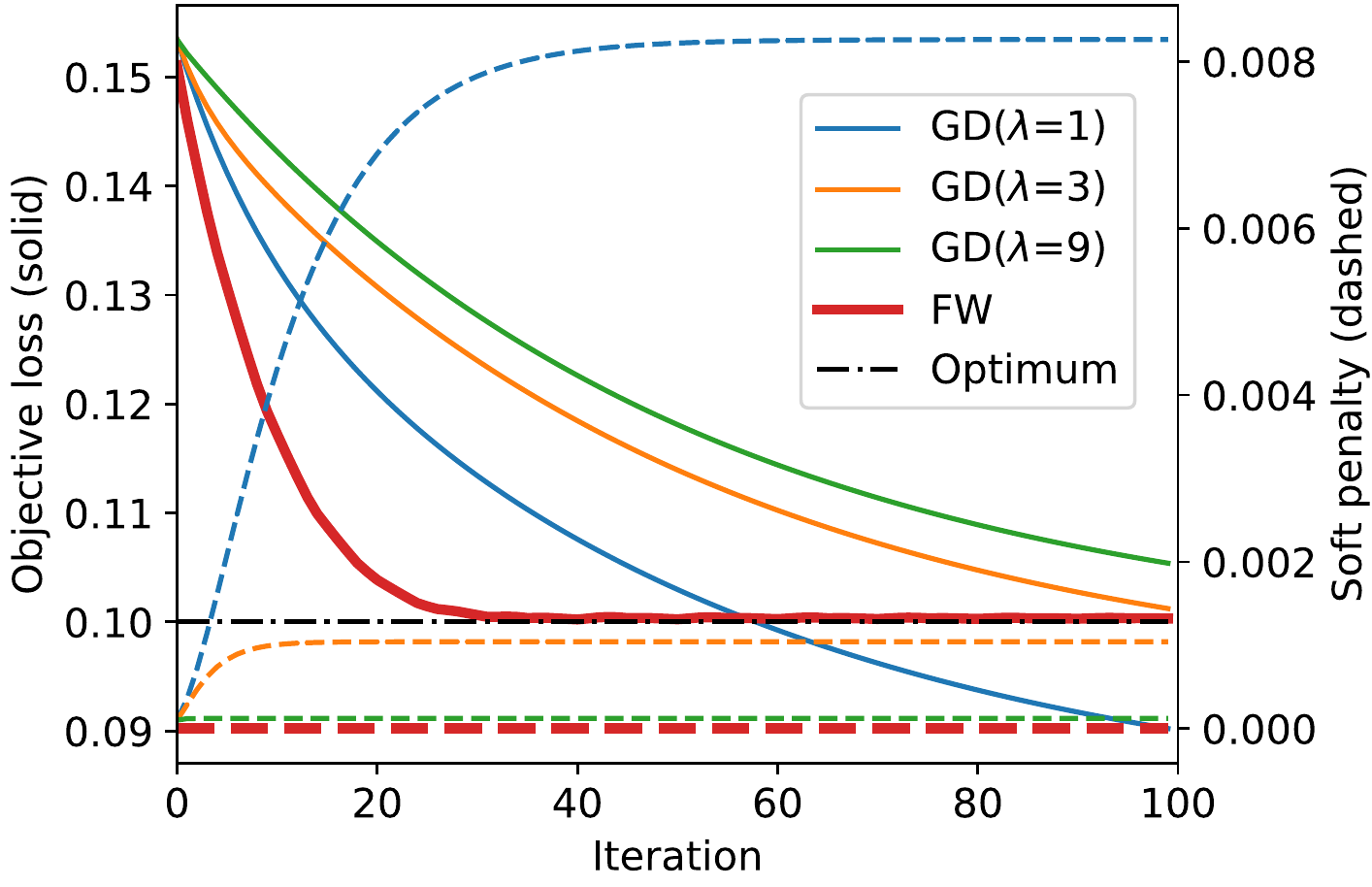}
    \caption{Objective loss and soft penalty for FW and GD for different values of the penalty coefficient $\lambda$. Solid and dashed curves represent the objective loss (left y-axis) and soft penalty (right y-axis), respectively. FW converges to the optimum much faster than GD.}
    \label{fig:fw_vs_gd}
\end{figure}

All this requires $\mathcal{S}_{LAT}$ to be convex. 
While $\mathcal{P}_\zeta(\mathcal{S})$ is obviously convex, formed by linear constraints, the latency constraint $\text{LAT}(\zeta)\leq T$ is not necessarily so. The latency formula \eqref{eqn:latency_formula} can be expressed as a quadratic constraint by constructing a matrix $\Theta\in\mathbb{R}_+^{S\cdot d\cdot|\mathcal{A}|\times S\cdot d}$ from $t^s_{b,c}$, such that,
\begin{align}\label{eqn:latency_matrix}
   LAT(\zeta)=LAT(\balpha, \bbeta) = \balpha^T \Theta \bbeta \quad ; \quad \zeta\in\mathcal{P}_{\zeta}(\mathcal{S})
\end{align}
Since $\Theta$ is constructed from measured latency, it is not guaranteed to be positive semi-definite, hence, the induced quadratic constraint could make $\mathcal{S}_{LAT}$ non-convex.

To overcome this, we introduce the Block Coordinate Stochastic Frank-Wolfe (BCSFW) Algorithm~\ref{alg:BCSFW}, that combines Stochastic Frank-Wolfe with Block Coordinate Frank-Wolfe~\cite{BCFW}. This is done by forming separated convex feasible sets at each step, induced by linear constraints only:
\begin{align}
    \label{eqn:alpha_search_space}
 \balpha_t \in \mathcal{S}^{\balpha}_t&=\{\balpha\mid \balpha\in \mathcal{A},& \bbeta_t^T \Theta^T \cdot \balpha&\leq T\}
 \\
    \label{eqn:beta_search_space}
 \bbeta_t \in \mathcal{S}^{\bbeta}_t&=\{\bbeta\mid \bbeta\in \mathcal{B},& \balpha_t^T \Theta \cdot \bbeta&\leq T\}
\end{align}
 This implies that $\zeta_t=(\balpha_t, \bbeta_t)\in\mathcal{S}_{LAT}$ for all $t$.
Moving inside the feasible domain at anytime avoids irrelevant infeasible structures from being promoted and hiding feasible structures.
\vspace{1em}
\begin{algorithm}[htb]
   \caption{Block Coordinate SFW (BCSFW)}
   \label{alg:BCSFW}
\begin{algorithmic}
\INPUT $(\balpha_0, \bbeta_0) \in \mathcal{S}_{LAT}$
\FOR{$t=0,\dots,K$}
\STATE Pick $\delta:=\balpha$ or $\delta:=\bbeta$ at random
\STATE Sample an i.i.d validation batch $(x_t, y_t)\sim\mathcal{D}_{val}$
\STATE $\xi_t=\argmin_{\xi \in \mathcal{S}^{\delta}_t}\xi^T\cdot \nabla_{\delta_t}\mathcal{L}_{CE}\left(x_t, y_t \mid w^*, \delta_t\right)$
\STATE Update $\delta_{t+1} = (1-\gamma_t)\cdot\delta_t + \gamma_t\cdot\xi_t$ with $\gamma_t=\frac{4}{t+4}$
\ENDFOR
\end{algorithmic}
\end{algorithm}

\subsubsection{Projection Back to the Discrete Space}\label{sec:projection}
As differentiable NAS methods are inherently associated with a continuous search space, a final discretizaiton step $\mathbb{P}:\mathcal{P}_\zeta(\mathcal{S})\rightarrow\mathcal{S}$ is required for extracting a single architecture. 
Most methods use the \textit{argmax} operator:
\begin{align}\label{eqn:argmax}
\bar{\alpha}^s_{b,c}&:=\mathbbm{1}\{c=\argmax_{c\in\mathcal{C}}\alpha^s_{b,c}\} 
\\\notag
\bar{\beta}^s_b&:=\mathbbm{1}\{b=\argmax_{b=2,..,d}\beta^s_b\} 
\end{align}
for all $s\in\{1,..,S\}, b\in\{1,..,d\},c\in\mathcal{C}$, where $(\bar{\balpha}, \bar{\bbeta})$ is the solution to the outer problem of~\eqref{eqn:NAS_bilevel}.

For resource-aware NAS methods, applying such projection results in possible violation of the resource constraints, due to the shift from the converged solution in the continuous space. Experiments showing that latency constraints are violated due to~\eqref{eqn:argmax} are provided in Section~\ref{sec:projection_effect}.

While several methods mitigate this violation by promoting sparse probabilities during the search, e.g.,~\cite{noy2020asap, nayman2019xnas}, 
our approach completely eliminates it by introducing an alternative projection step, described next.

Viewing the solution of the outer problem $(\balpha^*, \bbeta^*)$ as the credit assigned to each configuration,
we introduce a projection step that maximizes the overall credit while strictly satisfying the latency constraints. It is based on solving the following two linear programs:
\begin{align}\label{eqn:projection_step}
  \max_{\balpha\in\mathcal{S}^{\balpha^*}} \balpha^T\cdot \balpha^*
  \quad; \quad
  \max_{\bbeta\in\mathcal{S}^{\bbeta^*}} \bbeta^T\cdot\bbeta^*
\end{align}
Note, that when there is no latency constraint, e.g., $T\rightarrow\infty$, \eqref{eqn:projection_step} coincides with~\eqref{eqn:argmax}.

We next provide a theorem that guarantees that the projection~\eqref{eqn:projection_step} yields a sparse solution, representing a valid sub-network of the one-shot model. Specifically, a single probability vector from those composing $\balpha$ and $\bbeta$ contains up to two non-zero entries each, as all the rest are one-hot vectors. 
\begin{theo}
\label{thm:one_hot_sol}
The solution $(\balpha, \bbeta)$ of \eqref{eqn:projection_step} admits:
\begin{align*}
  &\sum_{c\in\mathcal{C}}|\alpha^s_{b,c}|^0 = 1 \,\forall 
  (s, b)\in\{1,..,S\}\otimes \{1,..,d\}\setminus{\{(s_{\balpha}, b_{\balpha})\}}
  \\
  &\sum_{b=1}^d|\beta^s_b|^0 = 1 \quad \forall s\in\{1,..,S\}\setminus{\{s_{\bbeta}\}}
\end{align*}
where $|\cdot|^0=\mathbbm{1}\{\cdot>0\}$ and $(s_{\balpha}$, $b_{\balpha})$, $s_{\bbeta}$  are single block and stage respectively, satisfying:
\begin{align}\label{eqn:dense_remiders}
  \sum_{c\in\mathcal{C}}|\alpha^{s_{\balpha}}_{{b_{\balpha}},c}|^0 \leq 2
  \quad ; \quad 
  \sum_{b=1}^d|\beta^{s_{\bbeta}}_b|^0 \leq 2
\end{align}
\end{theo}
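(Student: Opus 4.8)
The plan is to read \eqref{eqn:projection_step} as two \emph{independent} linear programs over a product of probability simplices intersected with a single additional linear (latency) inequality, and then to bound the number of nonzero coordinates of a \emph{vertex} optimum by the number of equality constraints — the standard sparsity property of basic feasible solutions of a standard-form LP. The proof is entirely a counting argument once the two programs are put in standard form.

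I would first put the $\balpha$-program in standard form. By \eqref{eqn:latency_matrix} its feasible set is $\mathcal{S}^{\balpha^*}=\{\balpha\in\mathcal{A}\mid(\Theta\bbeta^*)^T\balpha\le T\}$, i.e.\ the variables $\{\alpha^s_{b,c}\}$ subject to the $S\cdot d$ simplex equalities $\sum_{c\in\mathcal{C}}\alpha^s_{b,c}=1$ (one per block $(s,b)$, on pairwise disjoint variable sets), to $\alpha^s_{b,c}\ge 0$, and to the single latency inequality $\sum_{s,b,c}(\Theta\bbeta^*)^s_{b,c}\,\alpha^s_{b,c}\le T$. Adding a slack $\sigma\ge 0$ turns the latency inequality into an equality, so the program has exactly $m=S\cdot d+1$ equality constraints, and these are linearly independent (the block rows have disjoint supports, and only the latency row touches $\sigma$). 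The feasible region is nonempty — $\balpha^*$ itself lies in it since $(\balpha^*)^T\Theta\bbeta^*=LAT(\balpha^*,\bbeta^*)\le T$ — and compact, so the maximum is attained at a basic feasible solution, which is exactly what the Simplex algorithm~\cite{simplex} returns. By the standard fact that a basic feasible solution of a standard-form LP with $m$ independent equality constraints has at most $m$ strictly positive coordinates, the returned solution has at most $S\cdot d+1$ nonzero entries among $\{\alpha^s_{b,c}\}\cup\{\sigma\}$.

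Now the pigeonhole count. Each of the $S\cdot d$ simplex equalities forces at least one $\alpha^s_{b,c}>0$ inside its block, so the $\alpha$-coordinates already consume at least $S\cdot d$ of the $\le S\cdot d+1$ available nonzeros, leaving a budget of a single extra nonzero. If that slot is taken by $\sigma$, every block is one-hot, $\sum_{c\in\mathcal{C}}|\alpha^s_{b,c}|^0=1$ for all $(s,b)$ (and a single nonzero in a block equals $1$ by the simplex equality); if it is taken by a second $\alpha$-coordinate, that coordinate lies in one single block $(s_\balpha,b_\balpha)$, giving $\sum_{c\in\mathcal{C}}|\alpha^{s_\balpha}_{b_\balpha,c}|^0\le 2$ with every other block one-hot. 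Either way \eqref{eqn:dense_remiders} holds (the inequality $\le 2$ also covers the degenerate case in which the distinguished block itself turns out one-hot). The $\bbeta$-program is handled identically: $\mathcal{S}^{\bbeta^*}$ consists of the $S$ stage simplices over $b\in\{2,\dots,d\}$ (since $\beta^s_1\equiv 0$) plus one latency inequality, so after slackening it has $m=S+1$ independent equality constraints, hence at most $S+1$ nonzeros, of which at least one per stage is mandatory, leaving one extra slot absorbed either by the slack (all stages one-hot) or by a single stage $s_\bbeta$.

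The one point that needs care is that the sparsity bound applies to a \emph{vertex} of the feasible polytope, not to an arbitrary optimizer: if \eqref{eqn:projection_step} has a higher-dimensional optimal face, relative-interior points of that face can be dense, so the statement must be understood as referring to the basic optimal solution delivered by Simplex (equivalently, an extreme point of the optimal face). Everything else is routine — compactness and attainment of the maximum, the ``at most $m$ positive coordinates'' lemma for basic feasible solutions, and the trivial rank count using the disjoint supports of the block rows together with the slack column. As a sanity check, when the latency constraint is vacuous ($T\to\infty$) the latency row and its slack drop out, $m$ falls to $S\cdot d$ (resp.\ $S$), each simplex is maximized independently at its $\argmax$, and one recovers \eqref{eqn:argmax}.
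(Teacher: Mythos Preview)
Your proof is correct and takes a genuinely different route from the paper's. The paper casts each problem in \eqref{eqn:projection_step} as a \emph{relaxed} Multiple-Choice Knapsack Problem and argues by two exchange/perturbation lemmas: first, if two groups were non--one-hot, one could shift mass across four fractional coordinates (two per group) in a latency-neutral way that strictly increases the objective; second, if the single remaining fractional group had three nonzeros, a nontrivial solution of a $2\times 3$ homogeneous system again yields a feasible improving direction. Your argument instead puts each LP in standard form, counts $m=S\cdot d+1$ (resp.\ $S+1$) independent equality constraints after slackening the single latency inequality, invokes the ``$\le m$ positive coordinates'' property of basic feasible solutions, and finishes by pigeonhole against the one-mandatory-nonzero-per-simplex lower bound.

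The counting route is shorter and leans only on textbook LP vertex theory; it also makes explicit what the paper's exchange argument leaves implicit, namely that the conclusion pertains to a \emph{vertex} optimum (the perturbation lemmas are silent when the improvement $f$ or $p^T\bDelta$ vanishes, which is exactly the non-vertex case). The paper's approach, on the other hand, is more self-contained and ties directly into the MCKP literature, and it yields the same structural insight without appealing to the Simplex-returns-a-BFS fact. Your sanity check recovering \eqref{eqn:argmax} when $T\to\infty$ is a nice touch that the paper states but does not re-derive from the projection viewpoint.
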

Refer to the supplementary materials for the proof.

\textbf{Remark:} A negligible deviation is associated with taking the argmax~\eqref{eqn:argmax} over the only two couples referred to in~\eqref{eqn:dense_remiders}. Experiments supporting this are described in Section~\ref{sec:projection_effect}. Furthermore, this can be entirely eliminated by solving an equivalent Multiple-Choice Knapsack Problem (MCKP) as described in the supplementary material.

\section{Experimental Results}\label{sec:experiments}

\subsection{Search for State-of-the-Art Architectures}
\subsubsection{Dataset and Setting} \label{sec:exp_setting}
For all of our experiments, we train our networks using SGD with a learning rate of $0.1$ with cosine annealing, Nesterov momentum of $0.9$, weight decay of $10^{-4}$, applying label smoothing \cite{label_smoothing} of 0.1, mixup \cite{mixup} of 0.2, Autoaugment \cite{autoaugment}, mixed precision and EMA-smoothing.

We obtain the solution of the inner problem $w^*$ as specified in sections \ref{sec:inner_problem} and \ref{sec:evaluate_inner} over 80\% of a random 80-20 split of the ImageNet train set. We utilize the remaining 20\% as a validation set and search for architectures with latencies of $40, 45, 50, 55, 60$ and $25, 30, 40$ milliseconds running with a batch size of 1 and 64 on an Intel Xeon CPU and and NVIDIA P100 GPU, respectively. The search is performed according to section \ref{sec:outer_problem} for only 2 epochs of the validation set, lasting for 8 GPU hours\cref{fn:batch_16}. 
\vspace{-0.2em}
\subsubsection{Comparisons with other methods}\label{sec:exp_comparison}
\vspace{-0.2em}
We compare our generated architectures to other state-of-the-art NAS methods in Table~\ref{tab:exp} and Figure~\ref{fig:acc_nas}. 
For each model in the table, we use the official PyTorch implementation \cite{pytorch} and measure its latency running on a single thread with the exact same conditions as for our networks. We excluded further optimizations, such as Intel MKL-DNN~\cite{mkl_dnn}, therefore, the latency we report may differ from the one originally reported. 
For the purpose of comparing the generated architectures alone, excluding the contribution of evolved pretraining techniques, all the models (but OFA\cref{fn:ofa}) are trained from a random initialization with the same settings, specified in section~\ref{sec:exp_setting}. 
It can be seen that networks generated by our method meet the latency target closely, while at the same time surpassing all the others methods on the top-1 accuracy over ImageNet with a reduced scalable search time. 


 

\begin{table}[t]
\small
    \centering
    \begin{tabular}{c|l|c|c|c|}
    & Model & \makecell[c]{Latency \\ (ms)} & \makecell[c]{Top-1 \\ (\%)} & \makecell[c]{Total  Cost\\(GPU hours)}
    \\ \toprule
         \multirow{14}{*}{\rotatebox[origin=c]{90}{NVIDIA P100 GPU (batch:64)}}
         & MobileNetV3 & 28 & 75.2 & 180N \\ 
         & TFNAS-D & 30 & 74.2 & 236N \\ 
         & \textbf{Ours 25 ms} & \textbf{27} & \textbf{75.7} & 400 + \textbf{15N}\\ \cline{2-5}
         
         & MnasNetA1 & 37 & 75.2 & 40,000N \\
         & MnasNetB1 & 34 & 74.5 & 40,000N \\ 
         & FBNet & 41 & 75.1 & 576N \\ 
         & SPNASNet & 36 & 74.1 & 288 + 408N\\ 
         & TFNAS-B &  44 & 76.3 & 263N \\ 
         & TFNAS-C & 37 & 75.2 & 263N \\ 
         & \textbf{Ours 30 ms} & \textbf{32} & \textbf{77.3} & 400 + \textbf{15N} \\ \cline{2-5} 
         
         & TFNAS-A & 54 & 76.9 & 263N \\ 
         & EfficientNetB0 & 48 & 77.3 &  \\ 
         & MobileNetV2 & 50 & 76.5 & 150N \\ 
         & \textbf{Ours 40 ms} & \textbf{41} & \textbf{77.9} & 400 + \textbf{15N} \\ \midrule
         
         \multirow{15}{*}{\rotatebox[origin=c]{90}{Intel Xeon CPU (batch:1)}}
         
         & MnasNetB1 & 39 & 74.5 & 40,000N\\ 
         & TFNAS-A & 40 & 74.4 & 263N \\ 
         & SPNASNet & 41 & 74.1 & 288 + 408N\\ 
         & OFA CPU\footnotemark & 42 & 75.7 & 1200 + 25N \\ 
         & \textbf{Ours 40 ms} & \textbf{40} & \textbf{75.8} & 400 + \textbf{15N}\\ \cline{2-5}
         
         & MobileNetV3 & 45 & 75.2 & 180N \\ 
         & FBNet & 47 & 75.1 & 576N \\ 
         & MnasNetA1 & 55 & 75.2 & 40,000N \\ 
         & TFNAS-B & 60 & 75.8 & 263N \\ 
         
         & \textbf{Ours 45 ms} & \textbf{44} & \textbf{76.4} & 400 + \textbf{15N}\\ \cline{2-5}
         
         & MobileNetV2 & 70 & 76.5 & 150N \\ 
         & \textbf{Ours 50 ms} & \textbf{50} & \textbf{77.1} & 400 + \textbf{15N} \\ \cline{2-5}
         & EfficientNetB0 & 85 & 77.3 &  \\ 
         & \textbf{Ours 55 ms} & \textbf{55} & \textbf{77.6} &  400 + \textbf{15N}\\ \cline{2-5}
         & FairNAS-C & 60 & 76.7 & 240N \\ 
         & \textbf{Ours 60 ms} & \textbf{61} & \textbf{78.0} & 400 + \textbf{15N}\\ \hline
    \end{tabular}
    \caption{ImageNet top-1 accuracy, latency and cost comparison with other methods. The total cost stands for the search and training cost of N networks.} 
    \label{tab:exp}
\end{table}\footnotetext{\label{fn:ofa}Finetuning a model obtained by 1200 GPU hours.}
\vspace{-0.2em}
\subsection{Empirical Analysis of Key Components}
\vspace{-0.2em}
\subsubsection{Validation of the Latency Formula}\label{sec:validate_latency}
\vspace{-0.2em}
One of our goals is to provide a practical method to accurately meet the given resource requirements.
Hence, we validate empirically the accuracy of the latency formula~\eqref{eqn:latency_formula}, by comparing its estimation with the measured latency. Experiments were performed on two platforms: Intel Xeon CPU and NVIDIA P100 GPU, and applied to multiple networks. Results are shown in Figure~\ref{fig:latency}, which confirms a linear relation between estimated and measured latency, with a ratio of $1.003$ and a coefficient of determination of $R^2=0.99$. This supports the accuracy of the proposed formula.

\subsubsection{Evaluating the Solution of the Inner Problem $w^*$}\label{sec:evaluate_inner}
The ultimate quality measure for a generated architecture is arguably its accuracy over a test set when trained as a stand-alone model from randomly initialized weights.
To evaluate the quality of our one-shot model $ w^*$ we compare the accuracy of networks extracted from it with the accuracy of the corresponding architectures when trained from scratch. 
Naturally, when training from scratch the accuracy could increase. However, a desired behavior is that the ranking of the accuracy of the networks will remain the same with and without training from scratch.
The correlation can be calculated via the Kendall-Tau~\cite{KendallTau} and Spearman's~\cite{spearman1961general} rank correlation coefficients, denoted as $\tau$ and $\rho$, respectively.

To this end, we first train for 250 epochs a one-shot model $\bar{w}^*$ using the heaviest possible configuration, i.e., a depth of $4$ for all stages, with $er=6, k=5\times 5, se=on$ for all the blocks. Next, to obtain $w^*$, we apply the multi-path training of Section~\ref{sec:inner_problem} for additional 100 epochs of fine-tuning $\bar{w}^*$ over 80\% of a 80-20 random split of the ImageNet train set~\cite{imagenet_cvpr09}. The training settings are specified in Section~\ref{sec:exp_setting}. 
The first 250 epochs took 280 GPU hours\footnote{Running with a batch size of 200 on 8$\times$NVIDIA V100} and the additional 100 fine-tuning epochs took 120 GPU hours\footnote{\label{fn:batch_16}Running with a batch size of 16 on 8$\times$NVIDIA V100}, summing to a total of 400 hours on NVIDIA V100 GPU to obtain $w^*$.
To further demonstrate the effectiveness of our multi-path technique, we repeat this procedure also without it, sampling a single path for each batch.

For the evaluation of the ranking correlations, we extract 18 sub-networks of common configurations for all stages of depths in $\{2,3,4\}$ and blocks with an expansion ratio in $\mathcal{A}_{er}=\{3,4,6\}$, a kernel size in $\mathcal{A}_k=\{3\times 3,5\times 5\}$ and Squeeze and Excitation being applied. 
We train each of those as stand-alone from random initialized weights for 200 epochs over the full ImageNet train set, and extract their final top-1 accuracy over the validation set of ImageNet.

Figure~\ref{fig:acc_kendal_tau} shows for each extracted sub-network its accuracy without and with stand-alone training. It further shows results for both multi-path and single-path sampling.
 It can be seen that the multi-path technique improves $\tau$ and $\rho$ by $0.35$ and $0.17$ respectively, leading to a highly correlated rankings of $\tau\!=\!0.95$ and $\rho\!=\!0.99$.




\vspace{1.5mm}
\textbf{2 for 1 - $w^*$ Bootstrap:}\\
A nice benefit of the training scheme described in this section is that it further shortens the generation of trained models. We explain this next.

The common approach of most NAS methods is to re-train the extracted sub-networks from scratch.
Instead, we leverage having two sets of weights: $\bar{w}^*$ and $w^*$. Instead of retraining the generated sub-networks from a random initialization we opt for fine-tuning $w^*$ guided by knowledge distillation~\cite{KD} from the heaviest model $\bar{w}^*$.
Empirically, we observe that this surpasses the accuracy obtained when training from scratch at a fraction of the time. 
More specifically, we are able to generate a trained model within a small marginal cost of 15 GPU hours. The total cost for generating $N$ trained models is $400 + 15N$, much lower than the $1200 + 25N$ reported by OFA~\cite{OFA}. See Table~\ref{tab:exp}.
This makes our method scalable for many devices and latency requirements. 
Note, that allowing for longer training further improves the accuracy significantly (see the supplementary materials).

\subsubsection{Outer Problem: Hard vs Soft }
\label{subsec:toy_example}
Next, we evaluate our method's ability to satisfy accurately a given latency constraint.
We compare our hard-constrained formulation~\eqref{eqn:NAS_bilevel} with the common approach of adding soft penalties to the loss function~\cite{TF-NAS, fbnet}.
The experiments were performed over a simple and intuitive toy problem:
\begin{equation}
    \label{eqn:toy_ex}
    \min_x \|x\|^2  \quad \text{s.t. }\ 
       \Sigma_{i=1}^d x_i = 1 \quad ; \quad x\in\mathbb{R}^d
\end{equation}
Our approach solves this iteratively, using the Frank-Wolfe (FW)~\cite{frank_wolfe} update rule:
\begin{align}\label{eqn:toy_fw_step}
x_{t+1} &= (1-\gamma_t)\cdot x_t + \gamma_t\cdot\xi_t\\
\xi_t &=\argmin_x \|x\|^2 \text{ s.t. } \Sigma_{i=1}^d x_i = 1
\end{align}
starting from an arbitrary random feasible point, e.g. sample a random vector and normalize it.
The soft-constraint approach minimizes $\|x\|^2+\lambda \left(\sum_i x_i - 1\right)^2$ using gradient descent (GD), where $\lambda$ is a coefficient that controls the trade-off between the objective function and the soft penalty representing the constraint.

Figure~\ref{fig:fw_vs_gd}, shows the objective value for $d=10$ and the corresponding soft penalty value along the optimization for both FW and GD with several values of $\lambda$.
It can be seend that GD is very sensitive to the trade-off tuned by $\lambda$, often violating the constraint or converging to a sub-optimal objective value. On the contrary, FW converges faster to the optimal solution ($x^*=1/d=0.1$), while strictly satisfying the constraint throughout the optimization.

\subsubsection{Evaluating the Discretizing Projection}\label{sec:projection_effect}
Table~\ref{tbl:projection} evaluates the projection of architectures to the discrete space, as proposed in Section~\ref{sec:projection}. While the commonly used argmax projection violates the constraints by up to 10\%, those are strictly satisfied by our proposed projection.
\vspace{1em}
\begin{table}[htb]
    \centering
    \begin{tabular}{c|c|c|c|c|c|c|}
    Constraint & 35 & 40 & 45 & 50 & 55 & 60 \\ \midrule
    argmax & 36 & 42 & 50 & 54 & 58 & 66 \\ 
    Our Projection & 35 & 40 & 45 & 49 & 54 & 60 \\
    \end{tabular}
    \caption{The effect of discretization method on the actual latency shows that argmax method is harmful.}
    \label{tbl:projection}
\end{table}


\section{Conclusion} \label{Conclusion}
The problem of resource-aware differentiable NAS is formulated as a bilevel optimization problem with hard constraints. Each level of the problem is addressed rigorously for efficiently generating well performing architectures that strictly satisfy the hard resource constraints. HardCoRe-NAS turns to be a fast search method, scalable to many devices and requirements, while the resulted architectures perform better than architectures generated by other state-of-the-art NAS methods. We hope that the proposed methodologies will give rise to more research and applications utilizing constrained search for inducing unique structures over a variety of search spaces and resource specifications.

\bibliography{references}
\bibliographystyle{icml2021}
\newpage
\begin{center}
    \huge
    Appendix
\end{center}
\appendix

\section{More Specifications of the Search Space}
Inspired by EfficientNet~\cite{effnet} and TF-NAS~\cite{TF-NAS}, we build a layer-wise search space, as explained in Section~\ref{sec:search_space} and depicted in Figure~\ref{fig:super_net} and in Table~\ref{tab:search_space}.
The input shapes and the channel numbers are the same as EfficientNetB0. Similarly to TF-NAS and differently from EfficientNet-B0, we use ReLU in the first three stages. As specified in Section~\ref{sec:micro_search}, the ElasticMBInvRes block is our \textit{elastic} version of the MBInvRes block, introduced in~\cite{mobilenetv2}. Those blocks of stages 3 to 8 are to be searched for, while the rest are fixed.
~
\\
~
\begin{table}[h]
\resizebox{0.45\textwidth}{!}{
\begin{tabular}{c|c|c|c|c|c} \hline
  Stage & Input & Operation & $C_{out}$ & Act & b \\
  \hline
  1  & $224^2\times3$  & $3\times3$ Conv & 32 & ReLU & 1 \\
  2  & $112^3\times32$ & MBInvRes & 16 & ReLU & 1 \\
  3  & $112^2\times16$ & ElasticMBInvRes & 24  & ReLU  & $[2, 4]$ \\
  4  & $56^2\times24$  & ElasticMBInvRes & 40  & Swish & $[2, 4]$ \\
  5  & $28^2\times40$  & ElasticMBInvRes & 80  & Swish & $[2, 4]$ \\
  6  & $14^2\times80$  & ElasticMBInvRes & 112 & Swish & $[2, 4]$ \\
  7  & $14^2\times112$ & ElasticMBInvRes & 192 & Swish & $[2, 4]$ \\
  8  & $7^2\times192$  & ElasticMBInvRes & 960 & Swish & 1 \\
  9  & $7^2\times960$  & $1\times1$ Conv & 1280 & Swish & 1 \\
  10 & $7^2\times1280$ & AvgPool & 1280 & - & 1 \\
  11 & $1280$ & Fc & 1000 & - & 1 \\
  \hline
  \end{tabular}
  }
  \caption{Macro architecture of the one-shot model. "MBInvRes" is the basic block in~\cite{mobilenetv2}. "ElasticMBInvRes" denotes our \textit{elastic} blocks (Section~\ref{sec:micro_search}) to be searched for. "$C_{out}$" stands for the output channels. Act denotes the activation function used in a stage. "b" is the number of blocks in a stage, where $[\underline{b}, \bar{b}]$ is a discrete interval. If necessary, the down-sampling occurs at the first block of a stage.}
  \label{tab:search_space}
  \end{table}

The configurations of the ElasticMBInvRes blocks $c\in\mathcal{C}$ are sorted according to their expected latency as specified in Table~\ref{tab:configurations}.
\begin{table}[H]
\vspace{-1.5mm}
    \centering
    \begin{tabular}{|c||c|c|c|}
    \hline
    c & er & k & se \\
    \hline
    1 & 2 & $3\times 3$ & off \\
    2 & 2 & $5\times 5$ & on \\
    3 & 2 & $3\times 3$ & off \\
    4 & 2 & $5\times 5$ & on \\
    5 & 3 & $3\times 3$ & off \\
    6 & 3 & $5\times 5$ & on \\
    7 & 3 & $3\times 3$ & off \\
    8 & 3 & $5\times 5$ & on \\
    9 & 6 & $3\times 3$ & off \\
    10 & 6 & $5\times 5$ & on \\
    11 & 6 & $3\times 3$ & off \\
    12 & 6 & $5\times 5$ & on \\
    \hline
    \end{tabular}
    \caption{Specifications for each indexed configuration $c\in\mathcal{C}$. "er" stands for the expansion ratio of the point-wise convolutions, "k" stands for the kernel size of the depth-wise separable convolutions and "se" stands for Squeeze-and-Excitation (SE) with $on$ and $off$ denoting with and without SE respectively. The configurations are indexed according to their expected latency.}
    \label{tab:configurations}
\end{table}

\section{Searching for the Expansion Ratio}
Searching for expansion ration ($er$), as specified in Section~\ref{sec:micro_search}, involves the summation of feature maps of different number of channels:
\begin{align}\label{eqn:er_sum}
    y^s_{b, er} = \sum_{er\in\mathcal{A}_{er}} \alpha^s_{b, er}\cdot PWC^s_{b,er}(x^s_b)
\end{align}
where $PWC^s_{b,er}$ is the point-wise convolution of block $b$ in stage $s$ with expansion ratio $er$. 

The summation in \eqref{eqn:er_sum} is made possible by calculating $PWC^s_{b,\bar{er}}$ only once, where $\bar{er}=\max\mathcal{A}_{er}$, and masking its output several times as following:
\begin{align}\label{eqn:er_sum_mask}
    y^s_{b, er} = \sum_{er\in\mathcal{A}_{er}} \alpha^s_{b, er}\cdot PWC^s_{b,\bar{er}}(x^s_b)\odot \mathbbm{1}_{C\leq er\times C_{in}}
\end{align}
where $\odot$ is a point-wise multiplication, $C_{in}$ is the number of channels of $x^s_b$ and the mask tensors $\mathbbm{1}_{C\leq er\times C_{in}}$ are of the same dimensions as of $PWC^s_{b,\bar{er}}(x^s_b)$ with ones for all channels $C$ satisfying $C\leq er\times C_{in}$ and zeros otherwise.

Thus, all of the tensors involved in the summation have the same number of channels, i.e. $\bar{er}\times C_{in}$, while the weights of the point-wise convolutions are shared. Thus we gain the benefits of weight sharing, as specified in Section~\ref{sec:inner_problem}.

\section{Multipath Sampling Code}
We provide a simple PyTorch~\cite{pytorch} implementation for sampling multiple distinctive paths (sub-networks of the one-shot model) for every image in the batch, as specified in Section~\ref{sec:inner_problem}.
The code is presented in figure \ref{code:multipath}.
\begin{figure}[htb]
\begin{lstlisting}[language=Python]
import torch
from torch.nn.functional import gumbel_softmax

def multipath(a, ops, x):
    assert C = len(a) == len(ops)
    bs = x.shape[0]
    a = torch.log(a).repeat(bs)
    a = a.reshape(bs, C).transpose(0, 1)
    a_hat = gumbel_softmax(a)

    o = torch.zeros_like(x)
    for ah, op in in zip(a_hat, ops):
        o += ah.view(-1, 1, 1, 1) * op(x)
    
    return o
\end{lstlisting}
\caption{PyTorch Multipath Code}\label{code:multipath}
\end{figure}


\newpage
\section{A Brief Derivation of the FW Step}
Suppose $\mathcal{D}$ is a compact convex set in a vector space and $f \colon \mathcal{D} \to \mathbb{R}$ is a convex, differentiable real-valued function. The Frank-Wolfe algorithm~\cite{frank_wolfe} iteratively solves the optimization problem:
\begin{equation}
    \label{eqn:FW}
    \min_{\mathbf{x} \in \mathcal{D}}f(\mathbf{x}).
\end{equation}
To this end, at iteration $k+1$ it aims at solving:
\begin{equation}
    \label{eqn:FW2}
    \min_{\mathbf{x_k+\bDelta} \in \mathcal{D}}f(\mathbf{x_k+\bDelta}).
\end{equation}
Using a first order taylor expansion of $f$, (\ref{eqn:FW2}) is approximated in the neighborhood of $f(x_k)$, and thus the problem can be written as:
\begin{equation}
    \label{eqn:FW3}
    \min_{\mathbf{x_k+\bDelta} \in \mathcal{D}}\bDelta^T\nabla f(\mathbf{x_k})
\end{equation}
Replacing $\bDelta$ with $\gamma (\bs-\mathbf{x_k})$ for $\gamma\in [0,1]$, problem~\eqref{eqn:FW3} is equivalent to:
\begin{equation}
    \label{eqn:FW4}
    \min_{\mathbf{\mathbf{x_k}+\gamma (\bs-\mathbf{x_k})} \in \mathcal{D}}\gamma (s-\mathbf{x_k})^T\nabla f(\mathbf{x_k}).
\end{equation}
Assuming that $\mathbf{x_k} \in \mathcal{D}$, since $\mathcal{D}$ is convex,  $\mathbf{x_k}+\gamma (\bs-\mathbf{x_k}) \in \mathcal{D}$ holds for all $\gamma\in[0,1]$ iff
$\bs\in \mathcal{D}$. Hence, \eqref{eqn:FW4} can be written as following:
\begin{equation}
    \label{eqn:FW_step}
    \min_{\mathbf{s} \in \mathcal{D}}s^T\nabla f(\mathbf{x_k}).
\end{equation}
Obtaining the minimizer $s_k$ of \eqref{eqn:FW_step} at iteration $k+1$, the FW update step is:
\begin{equation}
\label{eqn:FW_update}
\mathbf{x}_{k+1}\leftarrow \mathbf{x}_k+\gamma(\mathbf{s}_k-\mathbf{x}_k).
\end{equation}
\vspace{-3mm}
\section{Obtaining a Feasible Initial Point}\label{sec:init_point}
Algorithm \ref{alg:BCSFW} requires a feasible initial point $(\vu_0, \bbeta_0) \in \mathcal{S}_{lat}$. assuming such a point exists, i.e. as $t$ is large enough, a trivial initial point $(\vu_0, \bbeta_0):=(\tilde{\vu}, \tilde{\bbeta})$ is associated with the lightest structure in the search space $\mathcal{S}\subset\mathcal{P}_\zeta(\mathcal{S})$, i.e. setting: 
\begin{align}\label{eqn:trivial_init}
    \tilde{\alpha}^s_{b,c}=\mathbbm{1}\{c=1\} \, ; \, \tilde{\beta}^s_b=\mathbbm{1}\{b=2\}
\end{align} for all $s\in\{1,..,S\}, b\in\{1,..,d\},c\in\mathcal{C}$, where $\mathbbm{1}\{\cdot\}$ is the indicator function.
However, starting from this point condemns the gradients with respect to all other structures to be always zero due to the way paths are sampled from the space using the Gumbel-Softmax trick, section \ref{sec:composed_search_space}.

Hence for a balanced propagation of gradients, the closest to uniformly distributed structure in $\mathcal{S}_{LAT}$ is encouraged. For this purpose we solve the following quadratic programs (QP) alternately,
\begin{align}\label{eqn:qp_init}
  &\min_{\vu\in\mathcal{S}^{\tilde{\vu}}} \sum_{s=1}^s \sum_{b=1}^{d-1}\sum_{c=1}^{|\mathcal{C}|-1} (\alpha^s_{b,c+1}-\alpha^s_{b,c})^2 
  \\
  &\min_{\bbeta\in\mathcal{S}^{\tilde{\bbeta}}} \sum_{s=1}^s \sum_{b=1}^{d-1}(\beta^s_{b+1}-\beta^s_{b})^2
  \notag
\end{align}
using a QP solver at each step. 

Sorting the indices of configurations according to their expected latency (see Table~\ref{tab:configurations}), the objectives in (\ref{eqn:qp_init}) promote probabilities of consecutive indices to be close to each other, forming chains of non-zero probability with a balanced distribution up to an infeasible configuration, there a chain of zero probability if formed. Illustrations of the formation of such chains are shown in Figure~\ref{fig:qp_init} for several latency constraints. Preferring as many blocks participating as possible over different configurations, the alternating optimization in \eqref{eqn:qp_init} starts with $\beta$. This yields balanced $\beta$ probabilities as long as the constraint allows it. 
\begin{figure*}[htb]
    \centering
\includegraphics[width=1.\textwidth]{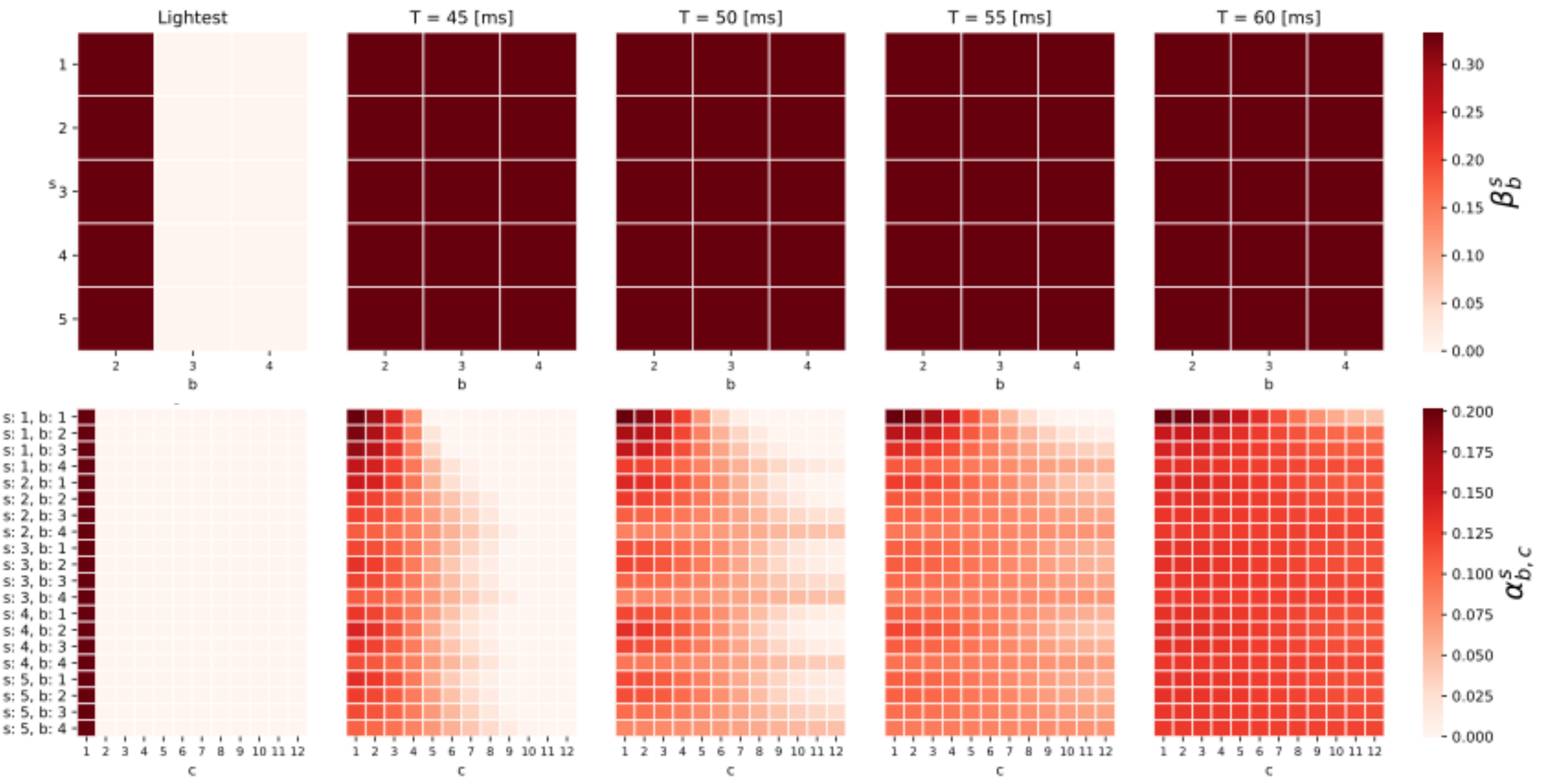}
    \caption{Heat maps representing the initial probability of picking each stage depth (top) and block configuration (bottom), sorted from the lightest to the heaviest (Table~\ref{tab:configurations}). Each couple of frames shows the initial point for a different latency constraint. Rows in each frames stand for different stages (top) and blocks (bottom). The lightest feasible initial point \eqref{eqn:trivial_init} involves only a single configuration of the first block in each stage, avoiding gradients from propagating to others (left). The balanced initial points \eqref{eqn:qp_init} form chains of similar non-zero probability followed by chains of zero probabilities, such that gradients are propagated through feasible paths with a balanced distribution.}
    \label{fig:qp_init}
    \vspace{15mm}
\end{figure*}

The benefits from starting with such initial point are quantified by averaging the relative improvements in top-1 accuracy for several latency constraints $\mathcal{T}=\{35, 40, 45, 50, 55\}$ milliseconds as following:
\begin{align}
\label{eqn:qp_init_vs_lightest}
    \frac{100}{|\mathcal{T}|} \sum_{T\in\mathcal{T}} \frac{Acc^T_{\text{balance init}} - Acc^T_{\text{lighest init}}}{Acc^T_{\text{lighest init}}}
\end{align}
where $Acc^T_{\text{balance init}}$ and $Acc^T_{\text{lighest init}}$ are the top-1 accuracy measured for fine-tuned models generated by searching the space initialized with \eqref{eqn:qp_init} and \eqref{eqn:trivial_init} respectively, under latency constraint $T$.
The calculation in \eqref{eqn:qp_init_vs_lightest} yields $8.3\%$ of relative improvement in favour of \eqref{eqn:qp_init} on average.



\vspace{-3mm}
\section{Proof of Theorem \ref{thm:one_hot_sol}}
In order to proof \ref{thm:one_hot_sol}, we start with proving auxiliary lemmas. To this end we define the \textit{relaxed} Multiple Choice Knapsack Problem (MCKP):

\begin{de}
Given $n\in\mathbb{N}$, and a collection of $k$ distinct covering subsets of $\{1,2,\cdots, n\}$ denoted as $N_i,i\in\{1,2,\cdots,k\}$, such that $\cup_{i=1}^k N_i=\{1,2,\cdots, n\}$ and $\cap_{i=1}^k N_i=\varnothing$ with 
associated values and costs $p_{ij}, t_{ij}~ \forall i \in \{1,\dots,k\}, j \in N_i$ respectively, the relaxed Multiple Choice Knapsack Problem (MCKP) is formulated as following:
\vspace{-3mm}
\begin{align}
\label{eqn:MCKP}
\notag
\max_{vu} &\sum_{i=1}^k \sum_{j\in N_i}  p_{ij} \vu_{ij} \\
\text{subject to}
&\sum_{i=1}^k\sum_{j\in N_i} t_{ij} \vu_{ij} \leq T \\
\notag
&\sum_{j \in N_i}\vu_{ij} = 1 \qquad \forall i \in \{1,\dots,k\} \\
\notag
&\vu_{ij} \geq 0  \qquad \forall i \in \{1,\dots,k\}, j \in N_i
\end{align}
where the binary constraints $\displaystyle{\vu_{ij}\in \{0,1\}}$ of the original MCKP formulation~\cite{MCKP} are replaced with $\displaystyle{\vu_{ij} \geq 0}$.
\end{de}

\begin{de}
 An one-hot vector $\vu_i$ satisfies: $$||\vu_i^*||^0=\sum_{j\in N_i} |\vu_{ij}^*|^0=\sum_{j\in N_i}\mathbbm{1}_{\vu_{ij}^*>0}=1$$ 
where $\mathbbm{1}_A$ is the indicator function that yields $1$ if $A$ holds and $0$ otherwise.
\end{de}

\begin{lem}\label{lem:single_non_one_hot}
The solution $\vu^*$ of the \textit{relaxed} MCKP \eqref{eqn:MCKP} is composed of vectors $\vu_i^*$ that are all one-hot but a single one. 
\end{lem}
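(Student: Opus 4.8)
\textbf{Proof plan for Lemma~\ref{lem:single_non_one_hot}.} The plan is to argue via the structure of vertices (basic feasible solutions) of the relaxed MCKP polytope, exploiting the fact that a linear program attains its optimum at a vertex. First I would observe that the feasible region of~\eqref{eqn:MCKP} is a polytope in $\mathbb{R}^n$ (with $n=\sum_i |N_i|$) cut out by the $n$ nonnegativity inequalities $\vu_{ij}\ge 0$, the $k$ equality "simplex" constraints $\sum_{j\in N_i}\vu_{ij}=1$, and the single knapsack inequality $\sum_{i,j} t_{ij}\vu_{ij}\le T$. Since the objective is linear, there is an optimal solution $\vu^*$ that is a vertex of this polytope, so it is determined by $n$ linearly independent active constraints.

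Next I would count degrees of freedom. The $k$ simplex equalities are always active and are linearly independent, accounting for $k$ of the $n$ required active constraints. The remaining $n-k$ active constraints must come from the nonnegativity bounds $\vu_{ij}=0$ and possibly the knapsack constraint. In the most generous case the knapsack constraint is tight, contributing one, so at least $n-k-1$ of the variables are forced to zero; otherwise at least $n-k$ variables are zero. Hence at most $k+1$ (respectively $k$) of the $\vu_{ij}^*$ are strictly positive. Now distribute these positive entries among the $k$ blocks: each block $N_i$ must contain at least one positive entry because $\sum_{j\in N_i}\vu_{ij}^*=1$ with $\vu^*\ge 0$. With at most $k+1$ positive entries spread over $k$ blocks each needing $\ge 1$, at most one block can contain two positive entries and all the others contain exactly one — i.e. all $\vu_i^*$ are one-hot except possibly a single one, which has at most two nonzero entries. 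This is exactly the claim (in fact slightly stronger, giving the "$\le 2$" bound that Theorem~\ref{thm:one_hot_sol} will invoke).

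The step I expect to require the most care is making the vertex/basic-feasible-solution argument fully rigorous in the presence of the inequality knapsack constraint: one must be careful that the $k+1$ candidate tight constraints (the $k$ equalities plus the knapsack) together with the zeroed bounds genuinely form a linearly independent system of size $n$, and handle the degenerate possibility that the knapsack constraint, even if satisfied with equality, is linearly dependent on the equalities and active bounds (in which case one simply falls back to the $n-k$ zeroed-bounds count and gets a fully one-hot solution, which still satisfies the lemma). I would phrase this cleanly by: (i) taking an optimal vertex $\vu^*$; (ii) letting $Z=\{(i,j):\vu_{ij}^*=0\}$ and noting $|Z|\ge n-k-1$ by the active-constraint count; (iii) concluding $\sum_i (\#\{j\in N_i:\vu_{ij}^*>0\}) = n-|Z|\le k+1$; and (iv) combining with the lower bound of $1$ positive entry per block. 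An alternative, if one wants to avoid polyhedral language, is a direct exchange argument: if two distinct blocks each had $\ge 2$ positive entries, one could perturb $\vu^*$ along a direction supported on those four-or-more coordinates that keeps all equalities and the knapsack constraint satisfied, contradicting either optimality or vertexness — but the counting argument is cleaner and I would present that as the main proof.
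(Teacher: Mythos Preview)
Your proposal is correct, and it takes a genuinely different route from the paper's own proof. The paper argues by a direct exchange: it supposes two blocks $i_1,i_2$ each contain two fractional entries, chooses a ratio $q$ so that shifting mass $\Delta$ between the two entries of block $i_1$ and $q\Delta$ between the two entries of block $i_2$ preserves both simplex constraints and the knapsack constraint, and then (after possibly swapping indices to force the sign) shows the objective strictly increases, contradicting optimality. This is precisely the ``direct exchange argument'' you mention at the end as an alternative.

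Your main argument instead uses the vertex structure of the LP: with $k$ always-active equalities and at most one knapsack inequality available, a basic feasible solution can have at most $k+1$ positive coordinates, and pigeonholing across the $k$ blocks forces all but one block to be one-hot. This is cleaner and more standard from an LP standpoint, and it buys you something extra: the same count immediately gives that the exceptional block has at most two nonzero entries, which the paper establishes separately as a second lemma (Lemma~\ref{lem:two_nonzeros}) via another exchange argument. Conversely, the paper's exchange proof has the virtue of being self-contained and not invoking polyhedral theory; it also (modulo the $f=0$ degeneracy you would need to address in either approach) rules out the two-fractional-blocks configuration for \emph{every} optimal solution rather than only for vertex optima. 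Your handling of the degeneracy where the tight knapsack row is linearly dependent on the equalities and active bounds is the right thing to flag, and your fallback (drop to $n-k$ zeroed bounds, get a fully one-hot vertex) is correct.
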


\begin{proof}
Suppose that $\vu^*$ is an optimal solution of (\ref{eqn:MCKP}), 
 and two indices $i_1, i_2$ exist such that $\vu_{i_1}^*, \vu_{i_2}^*$ are not one-hot vectors. As a consequence, we show that four indices, $j_1, j_2, j_3, j_4$ exist, such that $\vu_{i_1 j_1}^*, \vu_{i_1 j_2}^*, \vu_{i_2 j_3}^*, \vu_{i_2 j_4}^*\notin\{0,1\}$. 
 
 Define
 \begin{align}\label{eqn:q}
 q&=\frac{t_{i_2 j_2}-t_{i_1 j_1}}{t_{i_2 j_3}-t_{i_2 j_4}} 
 \end{align}
 and
 \begin{align*}
 f&=(t_{i_1 j_1}-t_{i_1 j_2})\left(\frac{p_{i_1 j_1}-p_{i_1 j_2}}{t_{i_1 j_1}-t_{i_1 j_2}}-\frac{p_{i_2 j_3}-p_{i_2 j_4}}{t_{i_2 j_3}-t_{i_2 j_4}}\right)
 \end{align*}
 and assume, without loss of generality, that $f>0$, otherwise one could swap the indices $j_1$ and $j_2$ so that this assumption holds.
 
Set
\begin{align}\label{eqn:Delta}
\Delta= \min\left((1-\vu_{i_1 j_1}^*), \frac{1 - \vu_{i_2 j_3}^*}{|q|}, \vu_{i_1 j_2}^*,  \frac{\vu_{i_2 j_4}^*}{|q|}\right)
\end{align}
such that $\Delta>0$ and construct another feasible solution of \eqref{eqn:MCKP} $\vu_{ij}\leftarrow\vu_{ij}^*$  for all $i,j$ but for the following indices:
\begin{align*}
    \vu_{i_1 j_1} &\leftarrow\vu_{i_1 j_1}^*+\Delta \\
    \vu_{i_1 j_2} &\leftarrow\vu_{i_2 j_2}^*-\Delta \\
    \vu_{i_2 j_3} &\leftarrow\vu_{i_2 j_3}^*+q\Delta \\
    \vu_{i_2 j_4} &\leftarrow\vu_{i_2 j_4}^*-q\Delta
\end{align*}
   
 The feasibility of $\vu$ is easily verified by the definitions in~\eqref{eqn:q} and~\eqref{eqn:Delta}, while the objective varies by:
 \begin{align}
 \notag
 \sum_{i=1}^k& \sum_{j\in N_i}  p_{ij} (\vu_{ij}-\vu_{ij}^*)
 \\\notag
 &= \Delta (p_{i_1 j_1}-p_{i_1 j_2})+q\Delta(p_{i_2 j_3}-p_{i_2 j_4})
 \\\notag
     &=\Delta (t_{i_1 j_1}-t_{i_1 j_2})\left(\frac{p_{i_1 j_1}-p_{i_1 j_2}}{t_{i_1 j_1}-t_{i_1 j_2}}-\frac{p_{i_2 j_3}-p_{i_2 j_4}}{t_{i_2 j_3}-t_{i_2 j_4}}\right)
     \\
     &=\Delta f >0 \label{eqn:contradiction_1}
 \end{align}
 where the last inequality holds due to \eqref{eqn:Delta} together with the assumption $f>0$. Equation~\eqref{eqn:contradiction_1} holds in contradiction to $\vu^*$ being the optimal solution of \eqref{eqn:MCKP}. Hence all the vectors of $\vu^*$ but one are one-hot vectors.
\end{proof}

\begin{lem}\label{lem:two_nonzeros}
The single non one-hot vector of the solution $\vu^*$ of the \textit{relaxed} MCKP \eqref{eqn:MCKP} has at most two nonzero elements.
\end{lem}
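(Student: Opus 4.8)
The plan is to continue the exchange-argument style used in the proof of Lemma~\ref{lem:single_non_one_hot}. By that lemma, the optimal $\vu^*$ of~\eqref{eqn:MCKP} has exactly one block, say $N_{i_0}$, for which $\vu_{i_0}^*$ is not a one-hot vector. Assume, towards a contradiction, that $\vu_{i_0}^*$ has three or more strictly positive entries, and pick three of them, $j_1,j_2,j_3\in N_{i_0}$; note that $\vu_{i_0 j_1}^*,\vu_{i_0 j_2}^*,\vu_{i_0 j_3}^*\in(0,1)$ since each is positive and they sum (with the remaining entries of the block) to $1$.

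First I would produce a nonzero perturbation direction confined to this block. Consider $(d_1,d_2,d_3)\in\mathbb{R}^3$ subject to $d_1+d_2+d_3=0$, so the block constraint $\sum_{j\in N_{i_0}}\vu_{i_0 j}=1$ is preserved, and $t_{i_0 j_1}d_1+t_{i_0 j_2}d_2+t_{i_0 j_3}d_3=0$, so the knapsack cost $\sum_{i,j}t_{ij}\vu_{ij}$ is preserved exactly. This is a homogeneous linear system of two equations in three unknowns, hence admits some $d\neq 0$, with no genericity assumption on the $t_{ij}$. For $\varepsilon$ of small enough absolute value, the point obtained from $\vu^*$ by replacing $\vu_{i_0 j_l}^*$ with $\vu_{i_0 j_l}^*+\varepsilon d_l$ for $l=1,2,3$ and leaving all other coordinates fixed is feasible for~\eqref{eqn:MCKP}: every block sum is unchanged, the knapsack cost is unchanged, the remaining block sums are unaffected, and nonnegativity holds because the three perturbed entries were strictly positive.

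Next I would exploit optimality. The objective changes by $\varepsilon\sum_{l=1}^3 p_{i_0 j_l}d_l$, a linear function of $\varepsilon$. If $\sum_{l=1}^3 p_{i_0 j_l}d_l\neq 0$, choosing the sign of $\varepsilon$ appropriately strictly increases the objective while remaining feasible, contradicting optimality of $\vu^*$; hence $\sum_{l=1}^3 p_{i_0 j_l}d_l=0$ and the objective is constant along the segment $\varepsilon\mapsto\vu^*+\varepsilon d$ (understood as the perturbation supported on $j_1,j_2,j_3$). Since $d\neq 0$ and $\sum_l d_l=0$, at least one $d_l$ is negative; increasing $|\varepsilon|$ in that sign until the first of $\vu_{i_0 j_1}^*,\vu_{i_0 j_2}^*,\vu_{i_0 j_3}^*$ reaches $0$ yields another optimal feasible solution that agrees with $\vu^*$ on all blocks but $N_{i_0}$ and has strictly fewer positive entries there. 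Iterating, we reach an optimal solution whose unique non-one-hot block vector has at most two positive entries, which is what is needed in the proof of Theorem~\ref{thm:one_hot_sol}.

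The main obstacle is the degenerate ``flat'' case $\sum_l p_{i_0 j_l}d_l=0$: there one does not obtain an outright contradiction but only a support-reducing move, so the statement must be understood as — and the write-up should make explicit that it suffices to exhibit — an optimal solution of this sparse form, rather than a claim that every optimizer is sparse. The places that genuinely use hypotheses are the strict positivity of the three chosen entries (for feasibility under small $\varepsilon$) and the exact preservation of the knapsack cost (so no case split on whether that constraint is tight is needed). As a cross-check, the conclusion also follows from a vertex argument: the polytope~\eqref{eqn:MCKP} is bounded with $n=\sum_i|N_i|$ variables, so an optimal vertex activates $n$ linearly independent constraints, of which the $k$ block equalities are always active, forcing at least $n-k-1$ of the bounds $\vu_{ij}\ge 0$ to be active; thus at most $k+1$ variables are positive, and since each block needs at least one positive entry, at most one block carries two, recovering both lemmas simultaneously.
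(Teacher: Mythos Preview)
Your main argument is essentially the paper's: pick three fractional entries in the single non-one-hot block, find a nonzero direction $(d_1,d_2,d_3)$ in the null space of the $2\times 3$ system coming from the block-sum and knapsack constraints, and perturb to contradict optimality. You are in fact more careful than the paper, which handles the degenerate case $\sum_l p_{i_0 j_l}d_l=0$ by a ``without loss of generality, flip the sign of $\bDelta$'' that does not actually cover it; your support-reduction move and the remark that the conclusion should be read existentially (some optimizer has this form) are correct and fill that gap. The vertex cross-check you give at the end is a genuinely different route: counting active constraints at a basic feasible solution recovers both Lemma~\ref{lem:single_non_one_hot} and Lemma~\ref{lem:two_nonzeros} simultaneously and is arguably the cleanest proof, though it trades the constructive exchange argument for an appeal to LP structure.
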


\begin{proof}
Suppose that $\vu^*$ is an optimal solution of (\ref{eqn:MCKP}) and an index $\hat{i}$ and three indices $j_1, j_2, j_3$ exist such that $\vu^*_{\hat{i} j_1}, \vu_{\hat{i} j_2}^*, \vu_{\hat{i} j_3}^*\notin\{0,1\}$. 
 
Consider the variables $\bDelta=(\Delta_1,\Delta_2,\Delta_3)^T\in\mathbb{R}^3$ and the following system of equations:
\begin{align}\label{eqn:system}
    t_{\hat{i}j_1}\cdot\Delta_1 + t_{\hat{i}j_2}\cdot\Delta_2 + t_{\hat{i}j_3}\cdot\Delta_3 &= 0\\ \notag
     \Delta_1 + \Delta_2 + \Delta_3 & = 0
%
\end{align}
At least one non-trivial solution $\bDelta^*$ to \eqref{eqn:system} exists, since the system consists of two equations and three variables. 

Assume, without loss of generality, that
\begin{align}\label{eqn:delta_positive}
    p_{\hat{i}j_1}\cdot\Delta_1 + p_{\hat{i}j_2}\cdot\Delta_2 + p_{\hat{i}j_3}\cdot\Delta_3 > 0
\end{align}
Otherwise replace $\bDelta^*$ with $-\bDelta^*$.

Scale $\bDelta^*$ such that
\begin{align}\label{eqn:Delta_scaling}
 0<\vu_{\hat{i} j_1}^*+\Delta_k^*<1 \quad \forall k\in\{1,2,3\}
\end{align}
 and construct another feasible solution of \eqref{eqn:MCKP} $\vu_{ij}\leftarrow\vu_{ij}^*$ for all $i,j$ but for the following indices:
 \begin{align*}
     \vu_{\hat{i} j_1}&\leftarrow\vu_{\hat{i} j_1}^*+\Delta_1 \\
     \vu_{\hat{i} j_2}&\leftarrow\vu_{\hat{i} j_2}^*+\Delta_2 \\
     \vu_{\hat{i} j_3}&\leftarrow\vu_{\hat{i} j_3}^*+\Delta_3
 \end{align*}
 Since $\bDelta^*$ satisfies \eqref{eqn:system} and \eqref{eqn:Delta_scaling}, the feasibility of $\vu$ is easily verified while the objective varies by:
 \begin{align}\label{eqn:contradiction_2}
 \notag
 \sum_{i=1}^k \sum_{j\in N_i}& p_{ij} (\vu_{ij}-\vu_{ij}^*) \notag\\
    &=p_{\hat{i}j_1}\cdot\Delta_1 + p_{\hat{i}j_2}\cdot\Delta_2 + p_{\hat{i}j_3}\cdot\Delta_3 > 0
 \end{align}
where the last inequality is due to \eqref{eqn:delta_positive}. 
 Equation~\eqref{eqn:contradiction_2} holds in contradiction to $\vu^*$ being the optimal solution of \eqref{eqn:MCKP}. Hence the single non one-hot vector of $\vu^*$ has at most two nonzero entries.
\end{proof}
\vspace{-3mm}
In order to prove Theorem~\ref{thm:one_hot_sol}, we use Lemmas~\ref{lem:single_non_one_hot} and~\ref{lem:single_non_one_hot} for $\balpha$ and $\bbeta$ separately, based on the observation that  each problem in~\eqref{eqn:projection_step} forms a \textit{relaxed} MCKP~\eqref{eqn:MCKP}. Thus replacing $\vu$ in~\eqref{eqn:MCKP} with $\balpha$ and $\bbeta$, $p$ is replaced with $\balpha^*$ and $\bbeta^*$ and the elements of \textbf{$t$} are replaced with the elements of $\bbeta^{*T}\Theta^T$ and $\balpha^{*T}\Theta$ respectively.

\vspace{-1.5mm}
\begin{remark}
One can further avoid the two nonzero elements by applying an iterative greedy solver as introduced in~\cite{MCKP}, instead of solving a linear program, with the risk of obtaining a sub-optimal solution.
\end{remark} 

\vspace{-4mm}
\section{2 for 1: $w^*$ Bootstrap - Accuracy vs Cost}
\vspace{-1mm}
In this section we compare the accuracy and total cost for generating trained models in three ways:
\begin{enumerate}
    \vspace{-2mm}
    \item Training from scratch
    \vspace{-2mm}
    \item Fine-tuning $w^*$ for 10 epochs with knowledge distillation from the heaviest model loaded with $\bar{w}^*$.
    \vspace{-2mm}
    \item Fine-tuning $w^*$ for 50 epochs with knowledge distillation from the heaviest model loaded with $\bar{w}^*$.
\end{enumerate}
The last two procedures are specified in Section~\ref{sec:evaluate_inner}.

The results, presented in Figure~\ref{fig:acc_nas2}, show that with a very short fine-tuning procedure of less than 7 GPU hours (10 epochs) as specified in Section~\ref{sec:exp_setting}, in most cases, the resulted accuracy surpasses the accuracy obtained by training from scratch. Networks of higher latency benefit less from the knowledge distillation, hence a longer training is required. A training of 35 GPU hours (50 epochs) results in a significant improvement of the accuracy for most of the models.
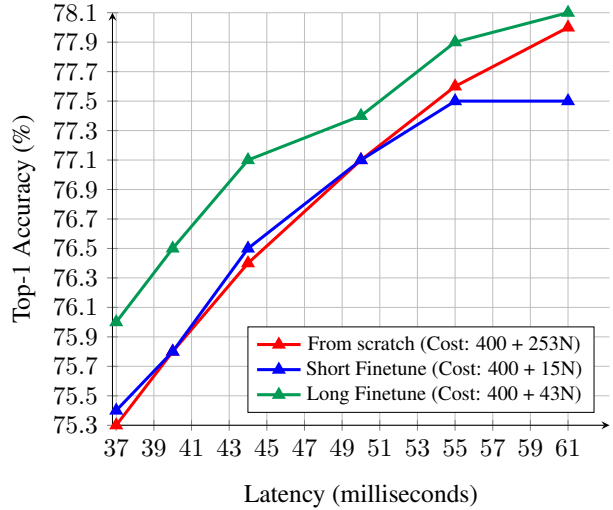
\begin{figure}[H]
\begin{adjustbox}{width=0.47\textwidth}
\begin{tikzpicture}
\begin{axis}[
            axis x line=left,
            axis y line=left,
            xmajorgrids=true,
            ymajorgrids=true,
            grid=both,
            xlabel style={below=1ex},
            enlarge x limits,
            ymin = 75.3,
            ymax = 78.1,
            xmin = 39.0,
            xmax = 61.0,
            xtick = {37,39,...,61},
            ytick = {75.3,75.5,...,78.1},
            ylabel = Top-1 Accuracy (\%),
            xlabel = Latency  (milliseconds),
            legend pos=south east,
            legend style={nodes={scale=0.8, transform shape}},
            legend pos=south east,
    ]


\addplot[color=red, mark=triangle*,very thick]coordinates {(37, 75.3)(40,75.8)(44, 76.4)(50, 77.1)(55, 77.6)(61,78.0)};\addlegendentry{From scratch (Cost: 400 + 253N)}

\addplot[color=blue, mark=triangle*,very thick]coordinates {(37, 75.4)(40,75.8)(44, 76.5)(50, 77.1)(55, 77.5)(61,77.5)};\addlegendentry{Short Finetune (Cost: 400 + 15N)}

\addplot[color=ForestGreen, mark=triangle*,very thick]coordinates {(37, 76.0)(40,76.5)(44, 77.1)(50, 77.4)(55, 77.9)(61,78.1)};\addlegendentry{Long Finetune (Cost: 400 + 43N)}

\end{axis}
\end{tikzpicture}
\end{adjustbox}
\vspace{-5mm}
\caption{Top-1 accuracy on Imagenet vs Latency measured on Intel Xeon CPU for a batch size of 1, for architectures found with our method trained from scratch and fine-tuned from the pretrained super-network}
\label{fig:acc_nas2}
\end{figure}
\vspace{-3mm}

\section{Solving the Mathematical Programs Requires a Negligible Computation Time}
In this section we measure the computation time for solving the mathematical programs associated with the initialization point, the LP associated with the FW step and the LP associated with our projection.
We show that the measured times are negligible compared to the computation time attributed to backpropagation.

The average time, measured during the search, for solving the linear programs specified in Algorithm~\ref{alg:BCSFW} and in Section~\ref{sec:projection} and the quadratic program specified in Appendix~\ref{sec:init_point} is $1.15\times 10^{-5}$ CPU hours.

The average time, measured during the search, for a single backpropagation of gradients through the one-shot model is $2.15\times 10^{-3}$ GPU Hours.

The overall cost of solving the mathematical programs for generating $N$ networks is about $0.02N$ CPU hours, which is negligible compared to the overall $400 + 15N$ GPU hours.

\end{document}